\newtheorem{asmp}{Assumption}[section]
\newtheorem{defn}{Definition}[section]
\newtheorem{thm}{Theorem}[subsection]
\newtheorem{lemma}{Lemma}[subsection]
\newtheorem{prop}{Proposition}[subsection]
\theoremstyle{definition}
\newtheorem{remark}{Remark}[section]
\newcommand{\bark}{\bar{k}}
\newcommand{\barj}{\bar{j}}
\newcommand{\barw}{\bar{w}}
\title{Uncovering Critical Sets of Deep Neural Networks via Sample-Independent Critical Lifting}
\author{Leyang Zhang\textsuperscript{\rm 1}\footnote{leyangz\_hawk@outlook.com}, Yaoyu Zhang\textsuperscript{\rm 2}\thanks{zhyy.sjtu@sjtu.edu.cn, corresponding author}, Tao Luo\textsuperscript{\rm 2, 3}\thanks{luotao41@sjtu.edu.cn, corresponding author}\\ ~\\
\textsuperscript{\rm 1} School of Mathematics, Georgia Institute of Technology \\~\\
\textsuperscript{\rm 2} School of Mathematical Sciences, Institute of Natural Sciences, MOE-LSC \\ 
Shanghai Jiao Tong University \\~\\
\textsuperscript{\rm 3} CMA-Shanghai, Shanghai Artificial Intelligence Laboratory\\
}
\begin{document}

\maketitle

\begin{abstract}
    This paper investigates the sample dependence of critical points for neural networks. We introduce a sample-independent critical lifting operator that associates a parameter of one network with a set of parameters of another, thus defining sample-dependent and sample-independent lifted critical points. We then show by example that previously studied critical embeddings do not capture all sample-independent lifted critical points. Finally, we demonstrate the existence of sample-dependent lifted critical points for sufficiently large sample sizes and prove that saddles appear among them.
\end{abstract}

\section{Introduction}

Neural networks have achieved remarkable success in a wide range of applications, but the understanding of their performance is still elusive. Theoretical studies are thus made to uncover such mysteries \citep{RSunOverview}. One major focus is the analysis of the loss landscape. This line of study is challenging due to the complicated, various kinds of network structure and loss function, and importantly, its dependence on data samples. 

Recent research has increasingly focused on how critical points in the loss landscape depend on the training data. A notable direction in this line of work involves the Embedding Principle \citep{EPHierarchical, EPShort, EPDepth}, which is motivated by the following question: given the critical points of a neural network, what can be inferred about the critical points of another network, without knowing the specific training samples? Critical embedding operators between neural networks of different widths, such as splitting embeddings, null embeddings, and more general compatible embeddings, have been proposed and studied in \citet{EPHierarchical, EPShort}. Critical lifting operators in depth between networks of varying depths have been proposed and studied in \citet{EPDepth}. However, the full extent to which these operators explain sample (in)dependence remains unclear. Parallel to this, many studies have investigated the behavior of critical points when specific information about the samples is known. For instance, \cite{Cooper} relates the dimensionality of the global minima manifold to the number of samples in a generic setting, while ref. \citet{LZhangGlobal} explores a teacher-student setup and reveals a hierarchical, branch-wise structure of the loss landscape near global minima that varies with sample size.

In this paper, we advance the understanding of sample dependence of critical points by focusing on neural networks of different widths that represent the same output function. Our main contributions are as follows:
\begin{itemize}
\item [(a)] We introduce a sample-independent critical lifting operator, which maps parameters from a narrower network to a set of parameters in a wider network, preserving both the output function and criticality regardless of the training samples.
\item [(b)] We demonstrate that not all sample-independent lifted critical points arise from previously studied embedding operators, thus highlighting a broader structure beyond existing frameworks \cite{EPHierarchical, EPShort}.
\item [(c)] We identify a class of output-preserving critical sets that, for sufficiently large sample sizes, generally contain sample-dependent critical points. These sets consist entirely of saddle points for one-hidden-layer networks and contains sample-dependent saddles for multi-layer networks.
\end{itemize}

\section{Related Works} 

\textbf{Embedding Principle.} The Embedding Principle (EP) was first observed for two neural networks of different widths, stating that ``the loss landscape of any network 'contains' all critical points of all narrower networks'' \citep{EPShort}. In refs. \citet{EPShort, EPHierarchical}, specific critical embedding operators have been proposed and studied. These are linear operators mapping parameters of a narrower network to a wider one which preserve output function and criticality -- the image of a critical point is always a critical point. Earlier works also observe the similar phenomenon for one hidden layer neural networks \citep{KFukumizu, KFukumizu2}. More recently, EP for two neural networks of different depths was observed \citep{EPDepth}. The paper introduces critical lifting operators associating a parameter of a shallower network to a set of parameters of a deeper one, where output function and criticality are preserved. In our work, we use the same idea to define sample-independent critical lifting operators, but we focus on two neural networks of different widths and show that not all sample-independent lifted critical points arise from known embedding operators. 

\textbf{Sample dependence of critical points.} Attempts have been made to explain how the choice of samples affects the geometry of loss landscape. Many works focus on global minima. In \citet{Cooper}, it is shown that for generic samples, the global minima is a manifold whose codimension equals the sample size. Ref. \citet{BSimsek} observes that under the teacher-student setting, part of the global minima of neural networks persist as samples change. In \citet{LZhangGlobal} this is further emphasized, and it studies how the other (sample-dependent) global minima varies -- ``gradually vanish'' as sample size increases, as well as how it affects the behavior of gradient dynamics nearby. Other works, such as \citet{ComputeCritPt}, study critical points assuming samples have specific distributions. Our work applies to both global and non-global critical points, and we emphasize sample-dependent lifted critical points for sufficiently large sample size, thus complementing the previous studies. 

\textbf{Analysis of saddles.} It has been shown that gradient dynamics almost always avoid saddles \citep{Lee2017}. Thus, it is essential to discover saddles in loss landscape of neural networks. Refs. \citet{KFukumizu, KFukumizu2, BSimsek, EPHierarchical, EPShort} showed that embedding local minima of a narrower network to a wider one tends to produce saddles. Additionally, research by \citeauthor{LVenturi} and \citeauthor{RSunWideNN} revealed that, when the network is heavily overparameterized, saddles not only exist but in fact there are no spurious valleys. Similar patterns have been observed in deep linear networks \citep{Nguyen1, Nguyen2, Kawaguchi}. In this paper, we show under mild assumptions on the training set-up that for one hidden layer networks, all sample-independent lifted critical points are saddles, and sample-dependent lifted saddles exist for multi-layer networks. 

\section{Preliminaries}\label{Section Preliminaries}

Let $\bN := \{1, 2, 3, ...\}$. Given $N \in \bN$, denote by $\bR^N$ the (real) Euclidean space of dimension $N$. Given Lebesgue measurable subsets $E_2 \siq E_1 \siq \bR^N$, \textit{the measure of $E_2$ in $E_1$} refers to the induced Lebesgue measure on $E_1$. For example, we would say $\bR \times \{(0,0)\} \siq \bR^3$ has zero measure in $\bR^2 \times \{0\} \siq \bR^3$. Then we define our notations and assumptions for neural networks and loss functions as follows. 

\subsection{Fully Connected Neural Networks}\label{Subsection Fully connected NN}

For simplicity, we only discuss fully-connected neural networks \textit{without bias terms}. We refer to this network architecture whenever we mention a neural network. An $L$ hidden layer neural network with parameter size $N$, input dimension $d$ and output dimension $D$ is denoted by $H: \bR^N \times \bR^d \to \bR^D$. It is defined iteratively as follows. First, we define the zero-th layer (input layer) as the identity function, with a redundant parameter $\theta^{(0)}$: 
\[ 
    H^{(0)}(\theta^{(0)}, x) = x, \quad x \in \bR^d. 
\]
Second, we choose an activation $\sigma: \bR \to \bR$. Then, for every $l \in \{1, ..., L\}$, let $m_l$ denote the number of neurons at the $l$-th layer. Define the $l$-th layer neurons by 
\[
    H^{(l)}(\theta^{(l)}, x) = [H_{k_l}^{(l)}(\theta^{(l)}, x)]_{k_l=1}^{m_l} = \left[ \sigma\left( w_{k_l}^{(l)} \cdot H^{(l-1)}(\theta^{(l-1)}, x) \right) \right]_{k_l=1}^{m_l}, 
\]
where $m_l$ is the width of $H^{(l)}$, $H_{k_l}^{(l)}$ is the $k_l$-th component of $H^{(l)}$, and $\theta^{(l)} := \left((w_{k_l}^{(l)})_{k_l=1}^{m_l}, \theta^{(l-1)}\right)$, each $w_{k_l}^{(l)}$ being a vector in $\bR^{m_{l-1}}$. Note that with our notation, each $H_{k_l}^{(l)}$ is independent of $w_k^{(l)}$ for all $k \ne k_l$. Finally, define $H(\theta, x) = [a_j \cdot H^{(L)}(\theta^{(L)}, x)]_{j=1}^D$ as the whole neural network, where $\theta 
:= \left((a_j)_{j=1}^D, \theta^{(L)}\right)$.\\ 

\begin{asmp}
    Assume that the activation $\sigma: \bR \to \bR$ is a non-polynomial analytic function. 
\end{asmp}

This assumption takes into consideration the commonly used activations such as tanh ($\frac{1 - e^{-x}}{1 + e^{-x}}$), sigmoid ($\frac{1}{1 + e^{-x}}$), swish ($\frac{x}{1 + e^{-x}}$), Gaussian ($e^{-ax^2}$), etc. Moreover, it is easy to see that when $\sigma$ is analytic, the neurons $\{H^{(l)}\}_{l=1}^L$ are all analytic and thus so is the whole network $H$. 

\begin{defn}[wider/narrower neural network] 
    Given two $L$ hidden layer neural networks $H_1, H_2$ both with input dimension $d$, output dimension $D$, and the hidden layer widths $\{m_l\}_{l=1}^L, \{m_l'\}_{l=1}^L$, respectively. We say $H_2$ is a wider network than $H_1$, or $H_1$ a narrower network than $H_2$, if $m_l \le m_l'$ for all $1 \le l \le L$. 
\end{defn}

\subsection{Loss Function}\label{Subsection Loss function}

Denote the set of samples as $\{(x_i, y_i)_{i=1}^n\}$, where $(x_i)_{i=1}^n \in \bR^{nd}$ are sample inputs and $(y_i)_{i=1}^n \in \bR^{nD}$ are sample outputs. Given $\ell: \bR^D \times \bR^D \to [0, \infty)$, we define the loss function (for neural networks with input dimension $d$ and output dimension $D$) as 
\[
    R(\theta) = \sum_{i=1}^n \ell(H(\theta, x_i), y_i)). 
\]
In this paper, we will often deal with neural networks of different widths. As a slight abuse of notation, we shall use $R$ for the loss function (corresponding to fixed samples $(x_i, y_i)_{i=1}^n$) for all neural networks with the same input and output dimensions. Also note that we shall write $R_S$ when emphasizing the samples $S = \{(x_i, y_i)_{i=1}^n\}$ of $R$. 

\begin{asmp}\label{Asmp Inner loss}
    We consider analytic $\ell$. For each $1 \le j \le D$, let $\partial_j \ell$ denote the $j$-th partial derivative for its first entry. We assume that $\ell(p,q) = 0$ if and only if $p = q$, and $\partial_p \ell(p, q) = 0$ if and only if $p = q$. Here $\partial_p \ell(p,q) = [\partial_j \ell(p,q)]_{j=1}^D$ is the gradient of $\ell$ with respect to its first entry.
\end{asmp}
\begin{remark}
    A common example is $\ell(p,q) = |p - q|^2$. In this case, the loss function is the one used in regression: $R(\theta) = \sum_{i=1}^n |H(\theta, x_i) - y_i|^2$. 
\end{remark}

\section{Sample Independent and Dependent Lifted Critical Points}\label{Section Main results} 

\begin{defn}[sample-independent critical lifting]\label{Defn Sample-ind crit lifting}
    Given two fully-connected neural networks $H_1, H_2$. Denote their parameter spaces by $\Theta_1, \Theta_2$, respectively. For each $\theta_1 \in \Theta_1$ let $\calS(\theta_1)$ be the collection of samples for which $\theta_1$ is a critical point: 
    \[
        \calS(\theta_1) = \left\{ S = \{(x_i, y_i)_{i=1}^n\}: \nabla R_S(\theta_1) = 0, n \in \bN \right\}. 
    \]
    Denote by $\calC_{\theta_1, S}$ the set of output and criticality preserving parameters of $H_2$: 
    \[
        \calC_{\theta_1, S} = \left\{ \theta_2 \in \Theta_2: H_2(\theta_2, \cdot) = H_1(\theta_1, \cdot), \nabla R_S(\theta_2) = 0 \right\}. 
    \]
    Define a sample-independent critical lifting operator as a map $\tau$ from $\Theta_1$ to the power set of $\Theta_2$ by 
    \begin{equation}
        \tau(\theta_1) = \Cap_{S \in \calS(\theta_1)} \calC_{\theta_1, S}. 
    \end{equation}
\end{defn}

\begin{defn}[sample-dependent/independent lifted critical points]
    Given two fully-connected neural networks $H_1, H_2$. Given $\theta_1$ and $S \in \calS(\theta_1)$ as in Definition \ref{Defn Sample-ind crit lifting}. We say a parameter $\theta_2 \in \calC_{\theta_1, S}$ is a sample-independent lifted critical point (from $\theta_1$) if $\theta_2 \in \tau(\theta_1) = \Cap_{S \in \calS(\theta_1)} \calC_{\theta_1, S}$. Otherwise, we say $\theta_2$ is a sample-dependent lifted critical point. 
\end{defn}

\begin{remark}
    To make the sample-independent critical lifting operator non-trivial we should require that $H_1, H_2$ have the same input and output dimensions -- otherwise $\tau(\theta_1) = \emptyset$ for all $\theta_1 \in \Theta_1$. In this work, we further consider the case in which $H_1, H_2$ have the same activation, same depth, but one is wider/narrower than the other. 
\end{remark}

\subsection{Sample Independent Lifted Critical Points}\label{Subsection Sample independent crit pt}

Recall that a critical embedding is an affine linear map from the parameter space of a narrower neural network to that of a wider one, which preserves output, representation and criticality \citep{EPHierarchical}. In particular, for any samples given, the image of a critical point is always a critical point. So by definition we have the following result summarized from \citep{EPHierarchical, EPShort}. 

\begin{prop}[critical embeddings produce sample-independent lifted critical points]
    The parameters produced by critical embedding operators are sample-independent lifted critical points. 
\end{prop}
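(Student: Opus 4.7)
The plan is to unwind the three defining properties of a critical embedding operator (output preservation, representation preservation, criticality preservation) and check that together they place $T(\theta_1)$ in every set $\calC_{\theta_1, S}$ appearing in the intersection that defines $\tau(\theta_1)$. This is essentially a definition-chase, so the proof will be short; the only subtle point is making sure the criticality-preservation property is stated for \emph{arbitrary} samples, which is precisely what the cited references \citet{EPHierarchical, EPShort} establish.

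First, I would fix a critical embedding $T: \Theta_1 \to \Theta_2$ between a narrower network $H_1$ and a wider network $H_2$ of the same depth, input dimension, and output dimension. Let $\theta_1 \in \Theta_1$ be arbitrary and let $S = \{(x_i, y_i)_{i=1}^n\} \in \calS(\theta_1)$, so by definition $\nabla R_S(\theta_1) = 0$. I need to show $T(\theta_1) \in \calC_{\theta_1, S}$ for every such $S$.

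Next, I would verify the two membership conditions separately. For the output-preservation condition $H_2(T(\theta_1), \cdot) = H_1(\theta_1, \cdot)$: this is part of the definition of a critical embedding (it holds as functions on the input space $\bR^d$, and hence in particular on the sample inputs $x_1, \dots, x_n$). For the criticality condition $\nabla R_S(T(\theta_1)) = 0$: because $T$ is a critical embedding and $\theta_1$ is a critical point of $R_S$ by hypothesis, the criticality-preservation property of $T$ (which, crucially, holds for \emph{any} sample set $S$) immediately gives $\nabla R_S(T(\theta_1)) = 0$. Thus $T(\theta_1) \in \calC_{\theta_1, S}$.

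Finally, since $S \in \calS(\theta_1)$ was arbitrary, I would conclude
\[
    T(\theta_1) \in \bigcap_{S \in \calS(\theta_1)} \calC_{\theta_1, S} = \tau(\theta_1),
\]
so $T(\theta_1)$ is a sample-independent lifted critical point, as required. The main (mild) obstacle is purely bookkeeping: one must be careful that the criticality-preservation property of a critical embedding in \citet{EPHierarchical, EPShort} is genuinely sample-agnostic (i.e., it is a statement at the level of the parameter-to-gradient map, not at the level of a fixed loss landscape). Once this is noted, the containment $T(\theta_1) \in \tau(\theta_1)$ follows without any computation.
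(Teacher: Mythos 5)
Your proof is correct and matches the paper's own reasoning, which treats the proposition as an immediate consequence of the definitions: a critical embedding preserves the output function and preserves criticality for arbitrary samples, so its image lies in $\calC_{\theta_1, S}$ for every $S \in \calS(\theta_1)$ and hence in $\tau(\theta_1)$. The paper gives no further argument beyond this definition-chase, so your write-up is simply a more explicit version of the same proof.
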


In refs. \citet{EPHierarchical, EPShort} some specific critical embedding operators are proposed and studied -- the splitting embedding, null-embedding and general compatible embedding. Unfortunately, these embedding operators are not enough to produce all sample-independent lifted critical points for deep neural networks. This follows from the following example: 

\textbf{Example. } Consider a three hidden layer neural network with $d$ ($d$ is arbitrary) dimensional input, one dimensional output and hidden layer widths $\{m_1, m_2, m_3\}$: 
\[
    H(\theta, x) = \sum_{k_3=1}^{m_3} a_{1 k_3} \sigma\left( \sum_{k_2=1}^{m_2} w_{k_3 k_2}^{(3)} \sigma\left( \sum_{k_1=1}^{m_1} w_{k_2 k_1}^{(2)} \sigma(w_{k_1}^{(1)} \cdot x) \right)\right). 
\]
Given two such networks $H_1, H_2$ with hidden layer widths $\{m_1, m_2, m_3\}$ and $\{m_1, m_2, m_3 + 1\}$, respectively. Define 
\begin{align*}
    E_{\text{narr}} &= \left\{ \theta_{\text{narr}} = \left( (a_{1 k_3})_{k_3=1}^{m_3}, (w_{k_3}^{(3)})_{k_3=1}^{m_3}, 0, 0 \right) \right\}, \\ 
    E_{\text{wide}} &= \left\{ \theta_{\text{wide}} = \left( (a_{1 k_3}')_{k_3=1}^{m_3+1}, (w_{k_3}^{_{'}(3)})_{k_3=1}^{m_3+1}, 0, 0 \right) \right\}
\end{align*}
as subsets in the parameter spaces of $H_1, H_2$, respectively. Then the image of $E_{\text{narr}}$ under the splitting embedding, null-embedding and general compatible embedding (altogether) is a proper subset of $E_{\text{wide}}$. Intuitively, this is because these operators ``assign'' a relationship between the weights on the added second layer neuron to the parameter in $E_{\text{narr}}$. On the other hand, it is easy to see that all parameters in $E_{\text{narr}}$ and $E_{\text{wide}}$ yield the same, constant zero output function, and are critical points, for \textit{arbitrary} samples $(x_i, y_i)_{i=1}^n$, $n \in \bN$. Therefore, the previously studied embedding operators do not produce all sample-independent lifted critical points when mapping $E_{\text{narr}}$ to $E_{\text{wide}}$. In particular, whatever sample we choose, we cannot avoid the sample-independent lifted critical points which are not produced by these embedding operators. See Proposition \ref{Prop Converse of EP} for details of a proof of the example.
\begin{remark}
    The example can be generalized to $L \ge 3$ hidden layer neural networks. 
\end{remark}

\subsection{Sample Dependent Lifted Critical Points}\label{Subsection Sample dependent crit pt}

We now turn our focus to sample-dependent lifted critical points. Starting with the one-hidden-layer, one dimensional output case, we show that under mild assumptions on activation and loss function, sample-dependent lifted critical points are saddles. These results extend to deeper architectures, where we identify a set of output-preserving parameters containing sample-dependent critical point and sample-dependent saddles. For both results, we highlight the requirement on sample size for these critical points to exist. 

We start with the one hidden layer, one dimensional output case. For an $m$-neuron-wide one hidden layer neural network, we write it as $H(\theta, x) = \sum_{k=1}^m a_k \sigma(w_k \cdot x)$ for simplicity, where $\theta = (a_k, w_k)_{k=1}^m$.

\begin{prop}[saddles, one hidden layer]\label{Prop SDCP are saddles}
    Given samples $(x_i, y_i)_{i=1}^n$ such that $x_i \ne 0$ for all $i$ and $x_i \pm x_j \ne 0$ for $1 \le i < j \le n$. Given integers $m, m'$ such that $m < m'$. For any critical point $\theta_{\text{narr}} = (a_k, w_k)_{k=1}^m$ of the loss function corresponding to the samples such that $R(\theta_{\text{narr}}) \ne 0$, the set of $(w_k')_{k=m+1}^{m'} \in \bR^{(m'-m)d}$ of weights making the parameter
    \begin{equation}\label{eq 1 for Prop One hidden layer SDCP}
        \theta_{\text{wide}} = (a_1, w_1, ..., a_m, w_m, 0, w_{m+1}', ..., 0, w_{m'}') 
    \end{equation}
    a critical point for the loss function has zero measure in $\bR^{(m'-m)d}$. Furthermore, any such critical point is a saddle. 
\end{prop}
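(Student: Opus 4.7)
The plan is to first reduce criticality of $\theta_{\text{wide}}$ to a single scalar equation on each added weight, then argue that the joint zero-set has measure zero via non-polynomial analyticity of $\sigma$, and finally exploit analyticity once more to produce saddle directions.

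First, I compute $\nabla R(\theta_{\text{wide}})$. Since $a_k'=0$ for $k>m$, one has $H(\theta_{\text{wide}},\cdot)=H(\theta_{\text{narr}},\cdot)$, so the partial derivatives with respect to the original parameters coincide with those at $\theta_{\text{narr}}$ and vanish by criticality; the partials $\partial R/\partial w_k'$ carry a factor $a_k'=0$ and vanish automatically; only the equations
\[
\partial R/\partial a_k' \;=\; g(w_k') \;:=\; \sum_{i=1}^n r_i\, \sigma(w_k' \cdot x_i) \;=\; 0, \qquad r_i := \partial_p \ell\!\left(H(\theta_{\text{narr}},x_i), y_i\right),
\]
remain non-trivial. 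Because $R(\theta_{\text{narr}}) \ne 0$, Assumption~\ref{Asmp Inner loss} forces some $r_{i_0} \ne 0$.

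Next I would establish a linear-independence lemma: under non-polynomial analytic $\sigma$, $x_i \ne 0$, and $x_i \ne \pm x_j$ for $i\ne j$, the family $\{v\mapsto \sigma(v\cdot x_i)\}_{i=1}^n$ is linearly independent on $\bR^d$. Restricting to a generic line $v=se$ yields scalars $\alpha_i := e\cdot x_i$ that are non-zero, distinct, and satisfy $\alpha_i \ne -\alpha_j$ (this last is where $x_i \ne -x_j$ is used), so the $|\alpha_i|$ are all distinct. Expanding the Taylor series of $\sigma$ at $0$, vanishing of $\sum_i r_i \sigma(s\alpha_i)$ on an interval forces $\sum_i r_i \alpha_i^k = 0$ for every $k$ with $\sigma^{(k)}(0)\ne 0$; non-polynomial-analyticity guarantees infinitely many such $k$, and a dominant-term argument using $|\alpha_1|>\cdots>|\alpha_n|$ then forces all $r_i=0$, contradicting $r_{i_0}\ne 0$. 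Consequently $g$ is a non-trivial analytic function on $\bR^d$, so $g^{-1}(0)$ is a proper analytic subvariety of Lebesgue measure zero, and the product critical set is null in $\bR^{(m'-m)d}$.

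For the saddle claim, fix any critical $\theta_{\text{wide}}$ and let $v^* := w_{m+1}'$, so $g(v^*)=0$. Perturb only the first added neuron, $(a_{m+1}', w_{m+1}') = (\alpha, v^* + \beta)$; a direct Taylor expansion in $\alpha$ yields
\[
R - R(\theta_{\text{wide}}) \;=\; \alpha\, g(v^* + \beta) \;+\; \tfrac{\alpha^2}{2}\, s(v^* + \beta) \;+\; O(\alpha^3),
\]
where $s(v) := \sum_i s_i\, \sigma(v\cdot x_i)^2$ and $s_i := \partial_p^2 \ell(H(\theta_{\text{narr}}, x_i), y_i)$. Since $g$ is analytic and not identically zero while $g(v^*)=0$, the identity theorem provides arbitrarily small $\beta_0$ with $g(v^* + \beta_0)\ne 0$. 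Fixing such $\beta_0$ and letting $|\alpha|\to 0$, the linear-in-$\alpha$ term dominates, so $R - R(\theta_{\text{wide}})$ takes the sign of $\alpha\, g(v^* + \beta_0)$; flipping the sign of $\alpha$ flips the sign of the loss difference. Hence arbitrarily close to $\theta_{\text{wide}}$ the loss achieves both positive and negative deviations, so $\theta_{\text{wide}}$ is neither a local minimum nor a local maximum.

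The main obstacle is the linear-independence lemma: the $x_i \ne \pm x_j$ hypothesis is precisely what accommodates even (e.g.\ Gaussian) and odd (e.g.\ tanh) activations simultaneously by producing distinct $|\alpha_i|$ after line restriction, and extracting the vanishing of every $r_i$ from an infinite family of power-sum identities requires an asymptotic or generalized-Vandermonde argument rather than finite linear algebra.
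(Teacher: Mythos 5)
Your proposal is correct, and the first half (reduction to the scalar equations $g(w_k')=0$, non-triviality of $g$ via linear independence of $\{v\mapsto\sigma(v\cdot x_i)\}_{i=1}^n$, and the measure-zero conclusion from analyticity) is essentially identical to the paper's argument, which isolates the same function (called $\varphi$ there) as the partial derivative with respect to the new outgoing weight and invokes Lemma \ref{Lem Lin ind of neurons} together with Lemma \ref{Lem Zero set of analytic function}. Your line-restriction and dominant-power-sum proof of the independence lemma is also the same mechanism as the paper's, though your single dominant-term induction along the infinitely many $k$ with $\sigma^{(k)}(0)\ne 0$ is arguably cleaner than the paper's parity case analysis. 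Where you genuinely diverge is the saddle step: the paper abstracts it into Lemma \ref{Lem A condition of saddle} (every neighborhood of $\theta_{\text{wide}}$ contains a point with equal loss but non-zero gradient, then follow the gradient flow in both time directions), whereas you expand $R$ directly to first order in the new outgoing weight $\alpha$ at a nearby $v^*+\beta_0$ with $g(v^*+\beta_0)\ne 0$ and read off both signs of $R-R(\theta_{\text{wide}})$ from $\alpha\,g(v^*+\beta_0)$. The two are logically equivalent here; your version is more elementary and exhibits explicit ascent/descent directions, while the paper's lemma is formulated once so it can be reused verbatim in the multi-layer case (Proposition \ref{AProp SDCP are saddles, L layer}). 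One cosmetic point: the $\tfrac{\alpha^2}{2}s(v^*+\beta)$ term and the $O(\alpha^3)$ remainder are never needed once $\beta_0$ is fixed — a plain first-order expansion with a $o(\alpha)$ remainder suffices — but nothing is wrong with carrying them.
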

\begin{remark}
    Due to symmetry of the network structure, the results hold under permutation of the entries of $\theta_{\text{wide}}$. 
\end{remark}
\begin{proof}
    We show that for a.e. $w_{m'}' \in \bR^d$, the partial derivative $\parf{R}{a_{m'}'}$ is non-zero, thus proving the first part of the result. The key to showing such a critical point must be a saddle is that any $\theta_{\text{wide}}$ of the form (\ref{eq 1 for Prop One hidden layer SDCP}) preserves output function, namely, we have $H(\theta_{\text{narr}}, x) = H(\theta_{\text{wide}}, x)$ for all $x$. See Proposition \ref{Aprop SDCP are saddles, one hidden layer case} for more details. 
\end{proof}

Then we show that there are sample-dependent lifted critical points when the sample size is larger than the parameter size of the narrower network. 

\begin{thm}[sample-dependent lifted critical points, one hidden layer]\label{Thm SDCP exists, one hidden layer case}
    Assume that $\ell: \bR \times \bR \to \bR$ satisfies: the range of $\partial_p \ell(p, \cdot)$ contains an open interval around $0$. Given integers $m, m' \in \bN$ such that $m < m'$. Fix $\theta_{\text{narr}} = (a_k, w_k)_{k=1}^m$. When sample size $n > 1 + (d+1)m$, there are sample-dependent lifted critical points $\theta_{\text{wide}}$ from $\theta_{\text{narr}}$ of the form (\ref{eq 1 for Prop One hidden layer SDCP}). Furthermore, when $n > 2 + (d+1)m$ there are sample-dependent lifted saddles of the form (\ref{eq 1 for Prop One hidden layer SDCP}).
\end{thm}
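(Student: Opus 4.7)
The plan is to parameterize the criticality conditions via the residuals $r_i := \partial_p \ell(H(\theta_{\text{narr}}, x_i), y_i)$. Because $a_k' = 0$ for all $k > m$ in the form (\ref{eq 1 for Prop One hidden layer SDCP}), the partials of $R_S$ with respect to the new hidden weights $w_k'$ vanish automatically, so the criticality of $\theta_{\text{wide}}$ reduces to the $(d+1)m$ linear-in-$r$ conditions coming from $\nabla R_S(\theta_{\text{narr}}) = 0$ (call this system (I)) together with one extra condition $\sum_{i=1}^n r_i \sigma(w_k' \cdot x_i) = 0$ per added neuron (call these conditions (II)). By the hypothesis on the range of $\partial_p \ell(p, \cdot)$, every sufficiently small $r \in \bR^n$ is realizable as the residual vector of some sample with prescribed inputs, so the task reduces to solving (I)+(II) nontrivially in $r$ for an appropriate choice of the added weights.

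I would fix inputs $(x_i)_{i=1}^n$ satisfying the genericity hypotheses of Proposition~\ref{Prop SDCP are saddles} and take all added weights equal, $w_{m+1}' = \cdots = w_{m'}' = w'$, so that (II) collapses to the single equation $r \cdot v(w') = 0$ with $v(w') := (\sigma(w' \cdot x_i))_{i=1}^n$. System (I)+(II) has rank at most $1 + (d+1)m$ on $r \in \bR^n$; hence when $n > 1 + (d+1)m$ the null space is of dimension at least one, yielding a nonzero $r$. Rescaling $r$ to small magnitude and recovering $y_i$'s via the surjectivity of $\partial_p \ell(H(\theta_{\text{narr}}, x_i), \cdot)$ near $0$ produces a sample $S$ for which $\theta_{\text{narr}}$ and $\theta_{\text{wide}}$ are simultaneously critical. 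Sample-dependence follows by showing that for generic $w'$, $v(w')$ lies outside the row span of (I): one then obtains some $r' \in \bR^n$ satisfying (I) but violating (II), whose realization as a sample $S' \in \calS(\theta_{\text{narr}})$ satisfies $\nabla R_{S'}(\theta_{\text{wide}}) \ne 0$, so $\theta_{\text{wide}} \notin \tau(\theta_{\text{narr}})$.

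For the saddle conclusion, the stricter bound $n > 2 + (d+1)m$ gives an extra dimension in the solution space of (I)+(II), which I would use to force the constructed $r$ to be nonzero while avoiding a codimension-one set on which the Hessian analysis of the Proposition would degenerate. Since $r \ne 0$, Assumption~\ref{Asmp Inner loss} implies some $y_i \ne H(\theta_{\text{narr}}, x_i)$ and hence $R_S(\theta_{\text{narr}}) > 0$, placing us squarely under the hypotheses of Proposition~\ref{Prop SDCP are saddles}; that proposition then upgrades the output-preserving critical point $\theta_{\text{wide}}$ to a saddle.

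The main obstacle lies in the two genericity statements underlying the construction: (i) that the $(d+1)m \times n$ matrix encoding (I) has full row rank for generic $(x_i)_{i=1}^n$ given $\theta_{\text{narr}}$, and (ii) that $v(w')$ falls outside its row span for generic $w'$. Both amount to ruling out nontrivial linear relations among the analytic functions $\sigma(w_k \cdot \cdot)$, $\sigma'(w_k \cdot \cdot) x_l$, and $\sigma(w' \cdot \cdot)$, and both rely crucially on non-polynomial analyticity of $\sigma$: a nontrivial relation that held identically would force a nonzero analytic function to vanish on an open set of inputs, a contradiction obtainable by an explicit derivative calculation. Once these genericity facts are in place, the remainder of the argument is routine dimension counting and an invocation of Proposition~\ref{Prop SDCP are saddles}.
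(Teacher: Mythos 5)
Your proposal follows essentially the same route as the paper's proof: reduce criticality to linear conditions on the residual vector $r_i = \partial_p \ell(H(\theta_{\text{narr}}, x_i), y_i)$, use the dimension count to produce a nonzero residual annihilated by the augmented system (I)+(II), use linear independence of the neuron functions $w \mapsto \sigma(w\cdot x_i)$ to produce a residual satisfying (I) but violating (II) for a.e.\ $w'$, realize both via the range hypothesis on $\partial_p\ell$, and invoke Proposition \ref{Prop SDCP are saddles} for the saddle claim. The only quibble is that your genericity item (i) — full row rank of the matrix encoding (I) — is not actually needed: the argument only requires that its kernel be nontrivial (guaranteed by $n > (d+1)m$) and that a fixed nonzero kernel element pair nontrivially with $v(w')$ for almost every $w'$, which is exactly the paper's use of Lemma \ref{Lem Lin ind of neurons}.
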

\begin{remark}
    It is clear that for any even integer $s$, $\ell(x,y) = (p - q)^s$ satisfies the hypothesis on $\ell$. In fact, by Lemma \ref{Lem Ran of inner loss, dim-1 output case}, this holds for all $\ell$ such that $\ell(p,q) = \ell(p-q, 0)$. We also show in Lemma \ref{Lem Binary cross-entropy} that the binary cross-entropy loss of distribution $p$ relative to distribution $q$, given by $\ell(p,q) = q \log p + (1-q) \log (1-p)$, satisfies this hypothesis. 
\end{remark}
\begin{proof}
    Specifically, we prove that for any $(x_i)_{i=1}^n \in \bR^{nd}$ with $x_i \ne 0$ for all $i$ and $x_i \pm x_j \ne 0$ for $1 \le i < j \le n$, and for a.e. $w' \in \bR^d$, there are sample outputs $(y_i)_{i=1}^n, (y_i')_{i=1}^n$ such that 
    \[
        \theta_{\text{wide}} = (a_1, w_1, ..., a_m, w_m, 0, w', ..., 0, w')
    \]
    is a critical point for the loss function corresponding to $(x_i, y_i')_{i=1}^n$, but not so to $(x_i, y_i)_{i=1}^n$. For $N \ge 2 + (d+1)m$, we can choose $(y_i')_{i=1}^n$ so that not all $\ell(H(\theta_{\text{wide}}, x_i), y_i)$'s vanish. 
\end{proof}

\begin{remark}
    Note that for one hidden layer neural networks every sample-dependent lifted critical point either achieves zero loss, or is a saddle. For simplicity, assume that the activation function is an even or odd function. Given a critical point $\theta_{\text{narr}} = (a_k, w_k)_{k=1}^m$ with $R(\theta_{\text{narr}}) \ne 0$. Consider any critical point $\theta_{\text{wide}} = (a_k', w_k')_{k=1}^{m'}$ representing the same output function as $\theta_{\text{narr}}$. By linear independence of neurons (see Lemma \ref{Lem Lin ind of neurons}), $a_{\bark}' = 0$ whenever $w_{\bark}' \notin \{w_k, -w_k \}_{k=1}^m$. On the other hand, if $w_{\bark}' \in \{w_k, -w_k \}_{k=1}^m$ then $\theta_{\text{wide}}$ is a sample-independent lifted critical point. Therefore, up to permutation of the entries, a sample-independent lifted critical point from $\theta_{\text{narr}}$ takes the form (\ref{eq 1 for Prop One hidden layer SDCP}), thus by Proposition \ref{Prop SDCP are saddles} it must be a saddle. Similar argument works for activations with no parity. 
\end{remark}

Now we generalize the results to multi-layer neural networks whose output dimensions are arbitrary. 

\begin{prop}[saddles, general case]\label{Prop SDCP are saddles, L layer}
    Given samples $(x_i, y_i)_{i=1}^n$ with $x_i\ne 0$ for all $i$ and $x_i \pm x_j \ne 0$ for $1 \le i < j \le n$. Given integers $\{m_l\}_{l=1}^L, \{m_l'\}_{l=1}^L$ such that $m_l < m_l'$ for every $1 \le l \le L$. Consider two $L$ hidden layer neural networks with input dimension $d$, hidden layer widths $\{m_l\}_{l=1}^L, \{m_l'\}_{l=1}^L$, and output dimension $D$. Denote their parameters by $\theta_{\text{narr}}, \theta_{\text{wide}}$, respectively. Let $\theta_{\text{narr}}$ be a critical point of the loss function corresponding to the samples $(x_i, y_i)_{i=1}^n$, such that $R(\theta_{\text{narr}}) \ne 0$. Denote the following sets: 
    \begin{align*}
        E &= \left\{ \theta_{\text{wide}} = ((a_j')_{j=1}^D, \theta_{\text{wide}}^{(L)}): H(\theta_{\text{wide}}, \cdot) = H(\theta_{\text{narr}}, \cdot), a_j' = (a_{j1}, ..., a_{jm_L}, 0, ..., 0) \right\}; \\ 
        E^* &= \left\{ \theta_{\text{wide}} \in E: \nabla R(\theta_{\text{wide}} ) = 0 \right\}. 
    \end{align*}
    Namely, $E$ is a set of parameters preserving output function, $E^*$ is the set of parameters in $E$ also preserving criticality. Then $E^* \ne E$. Furthermore, $E^*$ contains saddles. 
\end{prop}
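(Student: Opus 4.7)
The plan is to prove the two assertions separately, both building on the output-preservation identity $H(\theta_{\text{wide}},\cdot)=H(\theta_{\text{narr}},\cdot)$ that holds by definition for every $\theta_{\text{wide}}\in E$ and on its immediate consequence $R(\theta_{\text{wide}})=R(\theta_{\text{narr}})$ on all of $E$.

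For the claim $E^* \neq E$, I would exhibit a parameter in $E$ whose gradient does not vanish. Start from a convenient base point in $E$, for example the image of $\theta_{\text{narr}}$ under the null-embedding of \citet{EPHierarchical}, which lies in $E$ by construction. Pick an index $k_L > m_L$ corresponding to an extra $L$-th layer neuron; since its output weight is already zero, replacing its incoming weight $w_{k_L}^{(L)}$ by any $w \in \bR^{m_{L-1}'}$ keeps the parameter in $E$. Along this one-dimensional family, compute the partial derivative with respect to the corresponding output weight $a'_{j,k_L}$: the chain rule gives
\begin{equation*}
    \parf{R}{a'_{j,k_L}} \;=\; \sum_{i=1}^{n} c_{i,j}\,\sigma(w \cdot z_i),
\end{equation*}
where $c_{i,j} = \partial_j\ell(H(\theta_{\text{narr}},x_i),y_i)$ and $z_i = H^{(L-1)}(\theta^{(L-1)},x_i)$ is the $(L-1)$-th layer feature at $x_i$, both independent of $w$. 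Because $R(\theta_{\text{narr}}) \neq 0$, Assumption~\ref{Asmp Inner loss} forces at least one $c_{i,j}$ to be non-zero. By Lemma~\ref{Lem Lin ind of neurons}, once the sample conditions propagate to distinct $z_i$'s, the map $w \mapsto \sum_i c_{i,j} \sigma(w\cdot z_i)$ is a non-trivial analytic function, hence non-zero for a.e.\ $w$. Any such $w$ yields a parameter in $E \setminus E^*$.

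For the claim that $E^*$ contains saddles, I would argue that the null-embedded parameter $\theta_{\text{wide}}^{\text{null}} \in E^*$ cannot be a local minimum of $R$. Fix an arbitrary neighborhood $U$ of $\theta_{\text{wide}}^{\text{null}}$. The analysis above supplies a parameter $\theta' \in U \cap (E \setminus E^*)$: the one-parameter slice from Part~1 passes through $\theta_{\text{wide}}^{\text{null}}$, and within that slice non-critical points form a full-measure set, so there exist non-critical points arbitrarily close to $\theta_{\text{wide}}^{\text{null}}$. Since $\theta' \in E$ implies $R(\theta') = R(\theta_{\text{narr}}) = R(\theta_{\text{wide}}^{\text{null}})$, and since $-\nabla R(\theta') \neq 0$ is a descent direction, a small gradient step $\theta'' := \theta' - \epsilon\,\nabla R(\theta')$ stays inside $U$ and satisfies $R(\theta'') < R(\theta_{\text{wide}}^{\text{null}})$. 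Hence $\theta_{\text{wide}}^{\text{null}}$ is not a local minimum, and must therefore be a saddle.

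The main technical obstacle is verifying the non-triviality of the analytic map $w \mapsto \sum_i c_{i,j} \sigma(w \cdot z_i)$. Granting Lemma~\ref{Lem Lin ind of neurons}, this reduces to arguing that the previous-layer features $\{z_i\}_{i=1}^{n}$ are pairwise distinct for the chosen base parameter, which requires the sample conditions ($x_i \neq 0$, $x_i \pm x_j \neq 0$) to propagate through the compositions of layers $1, \ldots, L-1$. This is the one point where the multi-layer setting goes genuinely beyond the one-hidden-layer proof of Proposition~\ref{Prop SDCP are saddles}, and is where I expect the bulk of the careful bookkeeping to live.
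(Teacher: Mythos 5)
Your overall architecture (output preservation on $E$, a non-vanishing partial derivative $\partial R/\partial a'_{j,k_L}$ for a.e.\ choice of the extra neuron's incoming weight, then the ``equal loss, nonzero gradient nearby'' saddle criterion) is the same as the paper's. But there is a genuine gap at exactly the point you flag and then defer: the non-triviality of $w \mapsto \sum_i c_{i,j}\,\sigma(w\cdot z_i)$ requires, via Lemma \ref{Lem Lin ind of neurons}, that the $(L-1)$-th layer features satisfy $z_i \ne 0$ and $z_i \pm z_j \ne 0$ (pairwise distinctness alone is not enough for even/odd activations), and these conditions do \emph{not} propagate through the narrow network from the hypotheses on the $x_i$. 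They can fail outright: if, say, all first-layer weights of $\theta_{\text{narr}}$ vanish, then every $z_i$ equals the same constant vector, the neurons $\sigma(w\cdot z_i)$ are all identical, and $\varphi(w) = \bigl(\sum_i c_{i,j}\bigr)\sigma(w\cdot z)$ can be identically zero even though some $c_{i,j}\ne 0$. Your proposed base point, the null embedding of $\theta_{\text{narr}}$, inherits exactly these degenerate features (padded by constants), so it gives no control over them.

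The paper closes this gap not by propagating sample conditions through the narrow network but by exploiting the hypothesis $m_l < m_l'$ at \emph{every} hidden layer, which your proposal never uses below layer $L$: it inductively builds $\theta_{\text{wide}}^{(L-1)}$ by assigning to the extra neuron(s) at each layer $l$ incoming weights chosen (via the linear-independence lemma applied at the previous layer) so that the augmented feature vectors satisfy $H^{(l)}(\theta_{\text{wide}}^{(l)},x_i)\ne 0$ and $H^{(l)}(\theta_{\text{wide}}^{(l)},x_i)\pm H^{(l)}(\theta_{\text{wide}}^{(l)},x_j)\ne 0$, while the first $m_l$ coordinates still match the narrow network's features so that output is preserved. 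The distinctness is manufactured in the \emph{extra} coordinates, not inherited from $\theta_{\text{narr}}$. Your proof becomes correct once you replace the null-embedded base point by such a constructed one; as written, the ``careful bookkeeping'' you postpone is not bookkeeping but the essential step, and the route you sketch for it would fail for degenerate $\theta_{\text{narr}}$. (A minor additional point: to conclude ``saddle'' you should also produce nearby points of larger loss, e.g.\ by stepping along $+\nabla R$, not only the descent direction.)
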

\begin{remark}
    When $D = L = 1$, we recover the one hidden layer, one dimensional output case. 
\end{remark}

\begin{proof}
    The extra neurons at each layer of the wider network allows us to freely choose the corresponding parameters so that we have some output-preserving $\theta_{\text{wide}}$ with $H^{(L-1)}(\theta_{\text{wide}}, x_i) \ne 0$ for all $i$ and $H^{(L-1)}(\theta_{\text{wide}}^{(L-1)}, x_i) \pm H^{(L-1)}(\theta_{\text{wide}}^{(L-1)}, x_j) \ne 0$ for $1 \le i < j \le n$. Since 
    \begin{align*}
        \parf{H}{a_{j m_L'}'}(\theta_{\text{wide}}) 
        &= \sum_{i=1}^n \partial_j \ell(H(\theta_{\text{wide}}, x_i), y_i) \sigma\left( w_{m_L'}^{_{'}(L)} \cdot H^{(L-1)}(\theta_{\text{wide}}^{(L-1)}, x_i) \right) \\ 
        &= \sum_{i=1}^n \partial_j \ell(H(\theta_{\text{narr}}, x_i), y_i) \sigma\left( w_{m_L'}^{_{'}(L)} \cdot H^{(L-1)}(\theta_{\text{wide}}^{(L-1)}, x_i) \right).
    \end{align*}
    This reduces to the proof of Proposition \ref{Prop SDCP are saddles}. See Proposition \ref{AProp SDCP are saddles, L layer} for more details. 
\end{proof}

Similarly, sample-dependent lifted critical points exist for multi-layer neural networks. The proof of the theorem below follows the same idea as that of Theorem \ref{Thm SDCP exists, one hidden layer case}. 

\begin{thm}[sample-dependent lifted critical points, general case]\label{Thm SDCP exists, L layer}
    Assume that $\ell: \bR^D \times \bR^D \to \bR$ satisfies: the range of $\partial_p \ell(p, \cdot)$ contains a neighborhood around $0 \in \bR^D$. Consider two $L$ hidden layer neural networks with the same assumptions as in Proposition \ref{Prop SDCP are saddles, L layer}. Denote their parameters by $\theta_{\text{narr}}, \theta_{\text{wide}}$, respectively. Denote the parameter size of the narrower network by $N$. Fix $\theta_{\text{narr}}$. Then there are sample-dependent lifted critical points when sample size $n \ge \frac{1 + N}{D}$. Furthermore, there are sample-dependent lifted saddles when $n \ge \frac{1 + D + \sum_{l=2}^L m_l (m_{l-1}' - m_{l-1}) + N}{D}$. 
\end{thm}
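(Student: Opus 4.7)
The plan is to generalize the one-hidden-layer argument of Theorem \ref{Thm SDCP exists, one hidden layer case} by pinning down a specific reference point $\theta_{\text{wide}} \in E$, rank-bounding its gradient matrix, and converting kernel vectors into sample outputs via the range hypothesis on $\partial_p \ell$. The two thresholds will come from two different kernels: one in the narrow gradient matrix (giving sample-dependence at the first threshold) and one in the wide gradient matrix (giving strictly positive loss, hence a saddle by Proposition \ref{Prop SDCP are saddles, L layer}).

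\textbf{Construction of $\theta_{\text{wide}}$ and rank bound.} For inputs $(x_i)_{i=1}^n$ satisfying the separation conditions of Proposition \ref{Prop SDCP are saddles, L layer}, I would build $\theta_{\text{wide}} \in E$ by (i) copying the parameters of $\theta_{\text{narr}}$ into the first $m_l$ neurons at each layer $l$; (ii) zeroing every weight $w_{kk'}^{(l)'}$ with $k \le m_l$ and $k' > m_{l-1}$, so extra neurons at layer $l-1$ do not leak into narrow neurons at layer $l$; (iii) zeroing every weight out of an extra neuron into a narrow neuron at every subsequent layer, so that combined with $a_{jk}' = 0$ for $k > m_L$, all extra neurons are \emph{dead} relative to the output; and (iv) taking a common generic vector $w^{(L)*}$ for every $w_k^{(L)'}$ with $k > m_L$, with the remaining extra-to-extra weights chosen generically. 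Let $M_{\text{wide}} \in \bR^{N' \times nD}$ have $(\theta_k', (i,j))$-entry $\partial_{\theta_k'} H_j(\theta_{\text{wide}}, x_i)$. Since $H(\theta_{\text{wide}}, x_i) = H(\theta_{\text{narr}}, x_i)$, the chain rule at $\theta_{\text{wide}}$ groups its rows into four types: narrow-parameter rows coincide with those of $M_{\text{narr}}$ (contributing at most $N$ to the rank and ensuring $\ker M_{\text{wide}} \subset \ker M_{\text{narr}}$); rows for $a_{jk}'$ with $k > m_L$ depend only on $j$, since the $w_k^{(L)'}$ agree for $k > m_L$, and contribute at most $D$; rows for weights $w_{kk'}^{(l)'}$ with $k \le m_l$, $k' > m_{l-1}$ contribute at most $\sum_{l=2}^L m_l(m_{l-1}' - m_{l-1})$; and every remaining row is zero because its parameter only affects dead extras. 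Hence $\operatorname{rank}(M_{\text{wide}}) \le N + D + \sum_{l=2}^{L} m_l(m_{l-1}' - m_{l-1})$.

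\textbf{From kernel vectors to samples.} Criticality of $\theta_{\text{wide}}$ at $(x_i, y_i)$ is the equation $M_{\text{wide}} v = 0$ with $v_i := \partial_p \ell(H(\theta_{\text{narr}}, x_i), y_i)$, and the range hypothesis on $\ell$ realizes every sufficiently small $v \in \bR^{nD}$ by some $(y_i)$. For the first bound $nD \ge 1 + N$, generically $\dim \ker M_{\text{narr}} \ge 1$ and the $D$ new $a'$-rows force $\ker M_{\text{wide}} \subsetneq \ker M_{\text{narr}}$; a small nonzero $v$ in the difference yields samples $S'$ at which $\theta_{\text{narr}}$ is critical but $\theta_{\text{wide}}$ is not, and pairing against the zero-loss samples $S = \{(x_i, H(\theta_{\text{narr}}, x_i))\}_{i=1}^n$ (at which both are critical) certifies $\theta_{\text{wide}}$ as a sample-dependent lifted critical point. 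For the second bound $nD \ge 1 + N + D + \sum_{l=2}^L m_l(m_{l-1}' - m_{l-1})$, the rank inequality additionally gives $\ker M_{\text{wide}} \ne \{0\}$; any small nonzero $v \in \ker M_{\text{wide}}$ produces samples $\tilde S$ at which $\theta_{\text{wide}}$ is critical with strictly positive loss, because some $v_i \ne 0$ forces $y_i \ne H(\theta_{\text{narr}}, x_i)$. Proposition \ref{Prop SDCP are saddles, L layer} then promotes $\theta_{\text{wide}}$ to a saddle, and sample-dependence is again witnessed by contrast with the zero-loss samples.

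\textbf{Main obstacle.} The crux of the proof is the rank bound: one must verify that every row outside the three named types is identically zero (every partial derivative whose parameter only affects dead extra neurons vanishes), and that the three listed groups really attain their claimed ranks on a full-measure set of the free parameters, rather than merely being upper-bounded. Both tasks require unrolling the chain rule at $\theta_{\text{wide}}$ with care and then invoking an analytic-zero-set argument in $w^{(L)*}$ and the free extra-to-extra weights, leveraging the analyticity of $\sigma$ and the separation conditions on the $x_i$ already used in Proposition \ref{Prop SDCP are saddles, L layer}.
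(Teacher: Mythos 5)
Your proposal follows essentially the same route as the paper's proof (Proposition \ref{Prop SDCP exists, L layer}): the paper builds the same reference point (extra neurons kept alive but disconnected from the output, narrow-to-narrow weights copied, a common weight vector on the extra output-layer neurons), establishes your row classification as Lemma \ref{Lem Equal partial derivatives}, forms the matrices $M$, $M'$, $M''$ whose row counts give exactly your rank bound, and converts kernel vectors into sample outputs via the range hypothesis. The ``main obstacle'' you flag is resolved in the paper not by a genericity argument over all free extra-to-extra weights but by an explicit layer-by-layer induction constructing $\theta_{\text{wide}}^{(L-1)}$ so that $H^{(l)}(\theta_{\text{wide}}^{(l)}, x_i) \ne 0$ and $H^{(l)}(\theta_{\text{wide}}^{(l)}, x_i) \pm H^{(l)}(\theta_{\text{wide}}^{(l)}, x_j) \ne 0$, using linear independence of neurons (Lemma \ref{Lem Lin ind of neurons}) at each layer; only the last-layer weight $w^{(L)*}$ is then chosen by the analytic-zero-set argument.

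Two small corrections. First, in the saddle case the zero-loss samples cannot witness sample-dependence: $\theta_{\text{wide}}$ \emph{is} critical there, whereas Definition \ref{Defn Sample-ind crit lifting} requires exhibiting some $S \in \calS(\theta_{\text{narr}})$ at which $\theta_{\text{wide}}$ fails to be critical. The correct witness is the sample set $S'$ you already built from $v \in \ker M_{\text{narr}} \setminus \ker M_{\text{wide}}$ (available since the second threshold implies the first). Second, Proposition \ref{Prop SDCP are saddles, L layer} as stated only asserts that $E^*$ \emph{contains} saddles; to promote your specific $\theta_{\text{wide}}$ you should instead invoke Lemma \ref{Lem A condition of saddle} directly, using that the output-preserving set $E(\theta_{\text{wide}}^{(L-1)})$ contains non-critical points arbitrarily close to $\theta_{\text{wide}}$ (which is what the paper's proof of that proposition actually establishes pointwise). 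Neither issue affects the substance of the argument.
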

\begin{remark}
    When $D = L = 1$, we recover the one hidden layer, one dimensional output case. Also note that commonly seen losses such as $\ell(p,q) = (p-q)^s, p,q \in \bR^D$ for any even number $s$ satisfy the hypothesis on $\ell$. 
\end{remark}

\iffalse
We note that the assumptions, and thus the sample size threshold on the neural networks in Proposition \ref{Prop SDCP are saddles, L layer} guarantee the existence of sample-dependent lifted critical points/saddles. But in certain conditions these can be made better. What we need is the existence of some output-preserving $\theta_{\text{wide}}$ with $H^{(L-1)}(\theta_{\text{wide}}^{(L-1)}, x_i) \pm H^{(L-1)}(\theta_{\text{wide}}^{(L-1)}, x_j) \ne 0$ for $1 \le i < j \le n$. So for example, if $\theta_{narr}$ already has this property then we may only assume that $m_L' > m_L$, as $\theta_{\text{wide}}^{(L-1)}$ can be obtained from $\theta_{\text{narr}}$ using critical embeddings. 
\fi 

\section{Illustration}\label{Section Illustration}

In this section we illustrate our results in Section \ref{Section Main results} through a toy example. In the example, a specific critical point of a one neuron tanh network $H((a,w), x) = a \mathrm{tanh}(wx)$ is lifted to a set of parameters of a two neuron tanh network $H((a_1, w_1, a_2, w_2), x) = a_1 \mathrm{tanh}(w_1 x) + a_2 \mathrm{tanh}(w_2 x)$, where $a, w, a_k, w_k, x$ are real numbers. Specifically, we fix $\theta_1 = (1, \barw)$ with $\barw = 1.0258$, sample size $n = 4$, sample inputs $(x_1, x_2, x_3, x_4) = (1/4, 1, 4, 16)$ and vary $y_i$'s. We use $\ell: \bR \times \bR \to \bR$, $\ell(p,q) = (p-q)^2$. So  
\[
    R(\theta) = \sum_{i=1}^4 \left( H(\theta, x_i) - y_i \right)^2. 
\]
To make $\theta_1$ a critical point, $(y_i)_{i=1}^4$ should solve the linear system 
\[
    \begin{pmatrix}
        \tanh(\frac{1}{4} \barw)    &\tanh(\barw)   &\tanh(4 \barw) &\tanh(16 \barw) \\ 
        \frac{1}{4}\tanh'(\frac{1}{4} \barw)   &\tanh'(\barw) &4 \tanh'(4 \barw) &16 \tanh'(16 \barw)
    \end{pmatrix} \colFour{\tanh(\frac{1}{4}\barw) - y_1}{\tanh(\barw) - y_2}{\tanh(4 \barw) - y_3}{\tanh(16 \barw) - y_4} = \colTwo{0}{0}. 
\]
Let $\vep_i := \tanh(\barw x_i) - y_i$ for $1 \le i \le 4$. Clearly, the solution set for $(\vep_i)_{i=1}^4$ is a two dimensional subspace in $\bR^4$, and varying $(y_i)_{i=1}^4$ is equivalent to varying $(\vep_i)_{i=1}^4$. Numerically, an approximate solution curve for $(\vep_i)_{i=1}^4 = (\vep_i(t))_{i=1}^4$ is given by 
\[
    \left\{ (1 - 6.0689t, -0.5835 + 3.5621t, 0.3 - 0.3t, -0.1 - 0.9t): t \in \bR\right\}. 
\]

First, we show that the image of $\theta_1$ under splitting embeddings remains critical, and is independent of the samples. Note that the set of points produced by splitting embeddings is the line $E := \{(\delta, \barw, 1 - \delta, \barw): \delta \in \bR\}$ and the partial derivatives of the loss function satisfy 
\[
    \parf{R}{a_1}(\theta_2) = \parf{R}{a_2}(\theta_2), \quad \frac{1}{a_1} \parf{R}{w_1}(\theta_2) = \frac{1}{a_2} \parf{R}{w_2}(\theta_2), \quad \forall\, \theta_2 \in E. 
\]
Since $w_1 = w_2 = \barw$ is fixed over $E$, we illustrate the vector field 
\[
    (a_1, a_2) \mapsto \left( \parf{R}{a_1}(a_1, \barw, a_2, \barw), \frac{1}{a_1} \parf{R}{w_1}(a_1, \barw, a_2, \barw) \right) 
\]
as $(a_1, a_2)$ varies, for the samples we randomly choose. This is indicated in Figure \ref{Fig: Sample-ind crit pt} below. As we can see, the vector field vanishes (approximately) along the line $\{a_1 + a_2 = 1\}$, which implies that $E$ is critical under these samples. 

Second, we consider critical points in the set $E' := \left\{ (1, \barw, 0, w): w \in \bR\right\}$. According to Proposition \ref{Prop SDCP are saddles}, the points in $E'$ are saddles. In the experiment, we fix the samples by setting $(\vep_i)_{i=1}^4 = (1, -0.5835, 0.3, -0.1)\}$ and check the loss values for different $(a_2, w_2)$, meanwhile keeping $(a_1, w_1) = (1, \barw)$ fixed. For these samples, there are three critical points in $E'$. As illustrated in Figure \ref{Fig: SDCP are saddles}, the loss function takes values greater and less than $R(\theta_1) \approx 1.4405$ near each of them, thus showing that they are all saddles. 

\begin{figure}[h]
     \centering
     \begin{subfigure}[b]{0.31\textwidth}
         \centering
         \includegraphics[width=\textwidth]{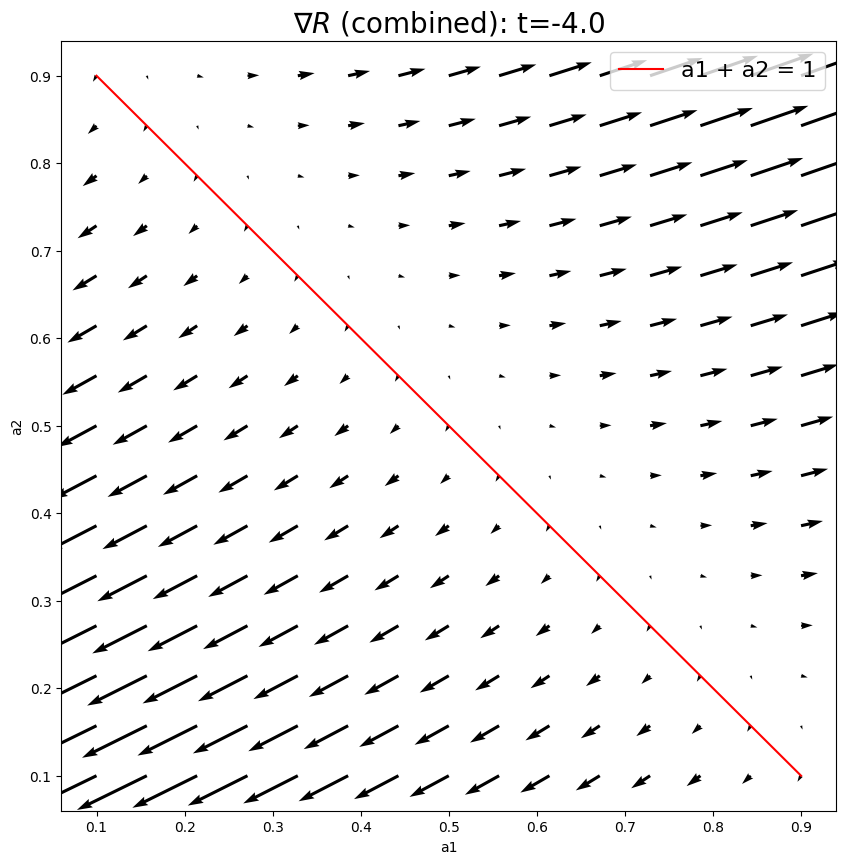}
     \end{subfigure}
     \hfill
     \begin{subfigure}[b]{0.31\textwidth}
         \centering
         \includegraphics[width=\textwidth]{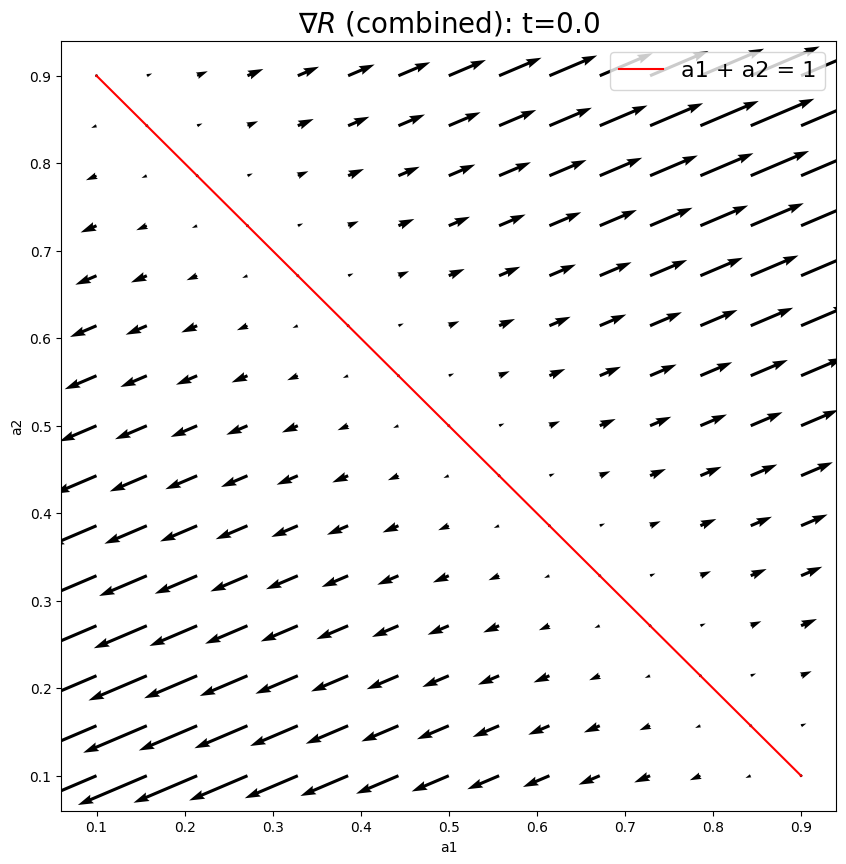}
     \end{subfigure}
     \hfill
     \begin{subfigure}[b]{0.31\textwidth}
         \centering
         \includegraphics[width=\textwidth]{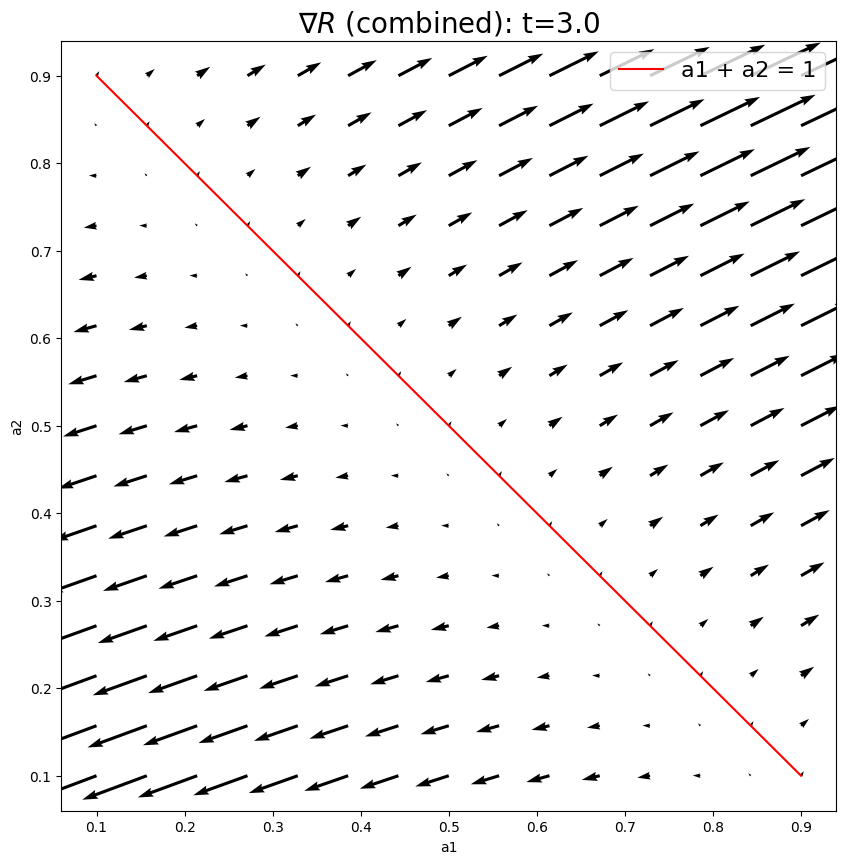}
     \end{subfigure}
     \caption{Plot of the vector field $(a_1, a_2) \mapsto \left( \parf{R}{a_1}(a_1, \barw, a_2, \barw), \frac{3}{a_1} \parf{R}{w_1}(a_1, \barw, a_2, \barw) \right)$ for $(a_1, a_2) \in (0.1, 0.9)^2$ with respect to $(\vep_i(-4))_{i=1}^4$ (left), $(\vep_i(0))_{i=1}^4$ (middle) and $(\vep_i(3))_{i=1}^4$. In all three figures, the vector field vanishes approximately along the line $\{a_1 + a_2 = 1\}$, indicating that the parameters produced by splitting embeddings are sample-independent saddles. }
     \label{Fig: Sample-ind crit pt}
\end{figure}

\begin{figure}[h]
     \centering
     \begin{subfigure}[b]{0.31\textwidth}
         \centering
         \includegraphics[width=\textwidth]{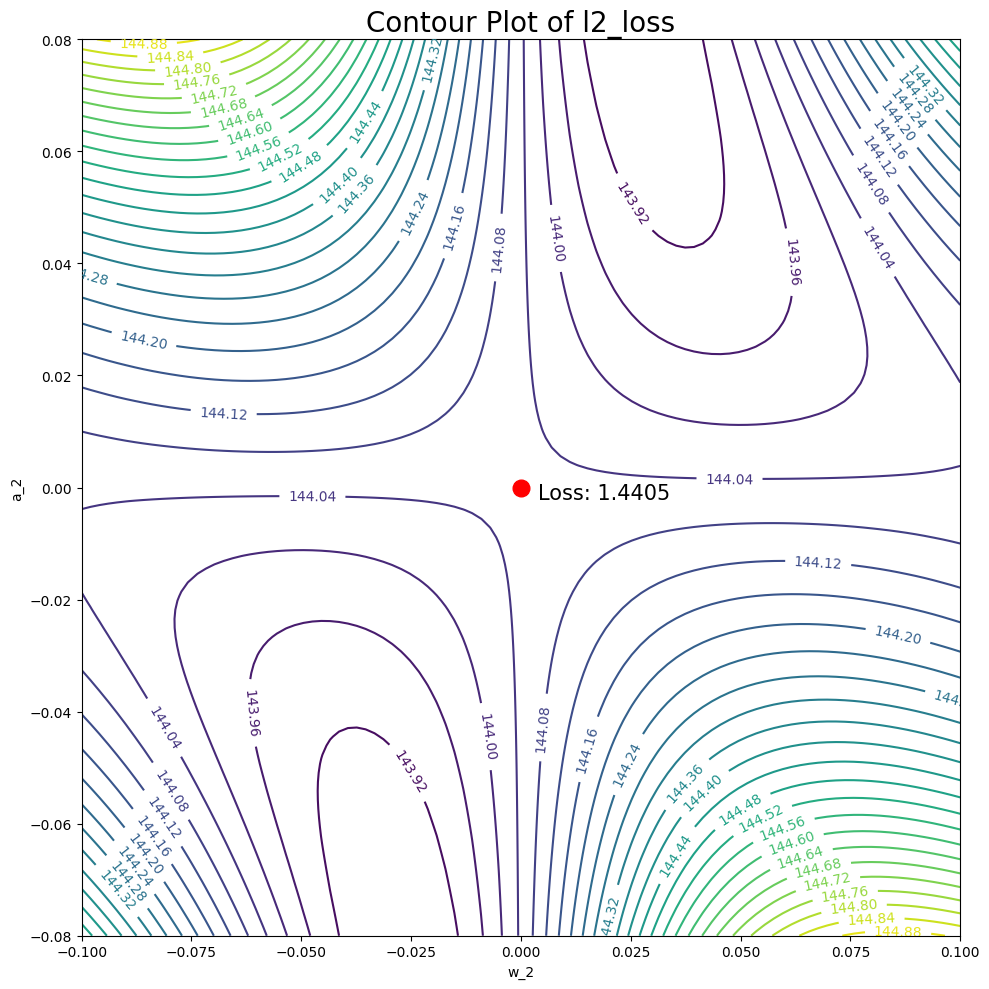}
     \end{subfigure}
     \hfill
     \begin{subfigure}[b]{0.31\textwidth}
         \centering
         \includegraphics[width=\textwidth]{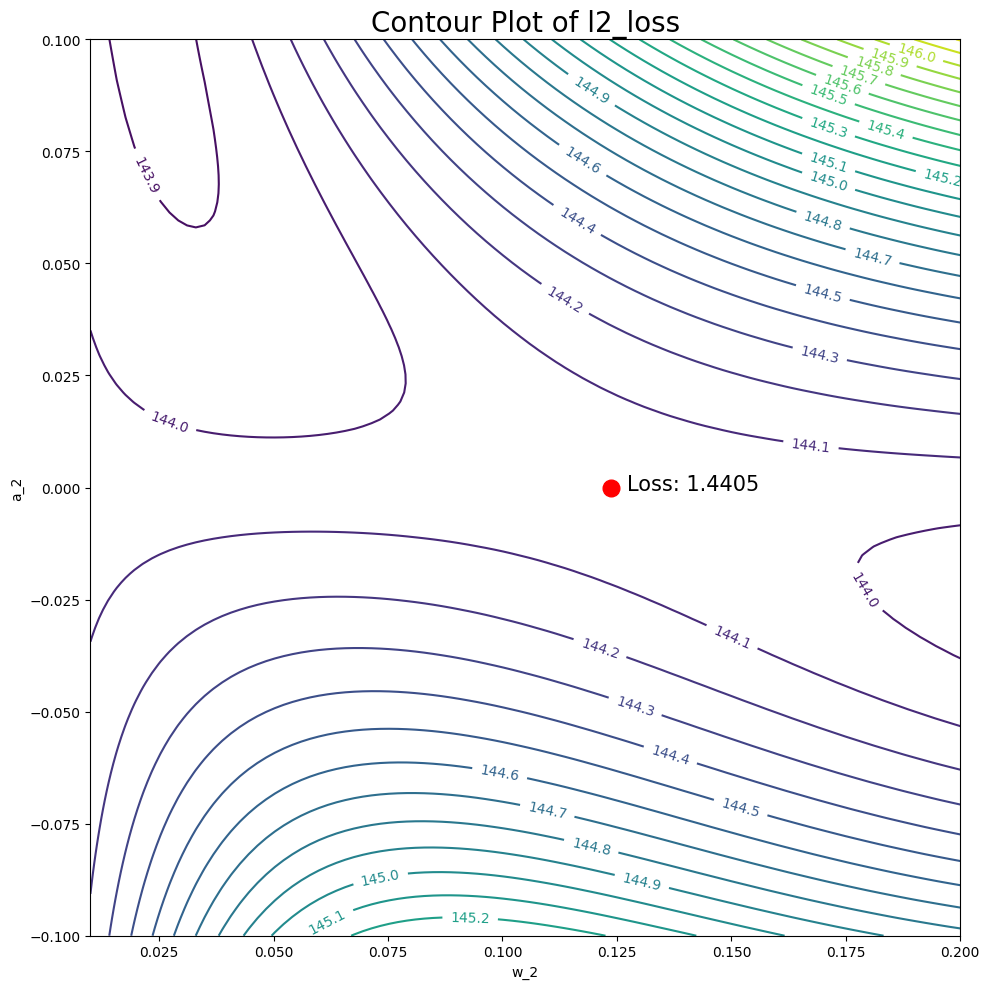}
     \end{subfigure}
     \hfill
     \begin{subfigure}[b]{0.31\textwidth}
         \centering
         \includegraphics[width=\textwidth]{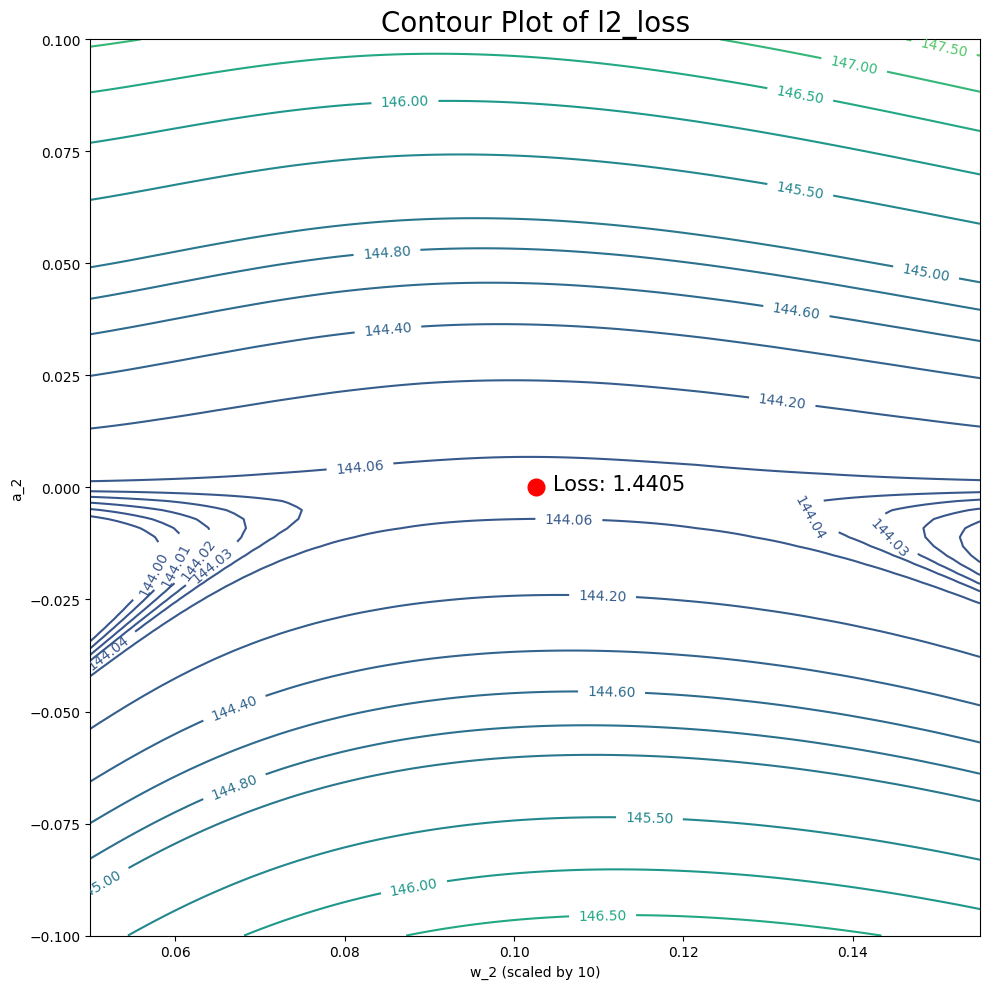}
     \end{subfigure}
     \caption{Contour plot of the loss function along the $(w_2, a_2)$-plane with respect to $(\vep_i(0))_{i=1}^4$. The points, marked in red, are approximately $(0, 0)$ (left), $(0.1236, 0)$ (middle) and $(1.0258, 0)$ (right). They correspond to the critical points $(1, \barw, 0, 0), (1, \barw, 0, 0.1236), (1, \barw, 0, 1.0258)$ in $E'$, respectively. From the level curves we can see that these three points are all saddles. Note that in the rightmost figure $w_2$-axis is scaled by 10 for illustration purpose. }
     \label{Fig: SDCP are saddles}
\end{figure}

Finally, we show the existence of sample-dependent critical points in $E'$. We illustrate this by plotting the zero set of the function 
\[
    (t,w) \mapsto \sum_{i=1}^4 \vep_i(t) \tanh(w x_i).  
\]
As shown in the proof of Proposition \ref{Aprop SDCP are saddles, one hidden layer case}, a parameter of the form $(1, \barw, 0, w)$ is a critical point for the loss corresponding to $(\vep_i(t))_{i=1}^4$ if and only if $\varphi(t,w) = 0$. In Figure \ref{Fig: Sample-dep crit pt} we can see that for $(t,w) \in (-0.5, 0.5) \times (-0.8, 0.8)$, the zero set of $\varphi$ has two curves; the value of $w$ on the blue curve varies as $t$ varies, which implies that sample-dependent lifted critical points of the form $(1, \barw, 0, w)$ exist. 

\begin{figure}[H]
    \centering
    \includegraphics[width=0.5\linewidth]{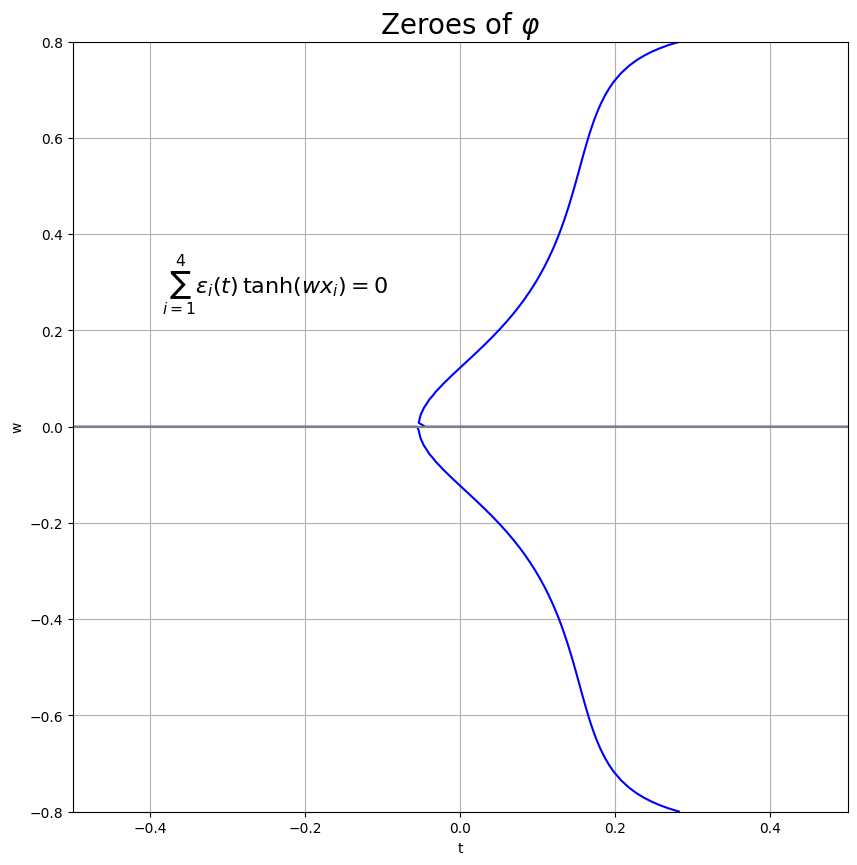}
    \caption{The zero set of $\varphi(t) = \sum_{i=1}^4 \vep_i(t) \mathrm{tanh}(w x_i)$ for $(t,w) \in (-0.5, 0.5) \times (-0.8, 0.8)$. The blue curve minus the origin, which arises when $t$ ranges approximately from $-0.05$ to $0.3$, is locally the graph of a non-constant function in $t$. This indicates that there is a sample-dependent lifted critical point for each such $t$. Also note that the grey curve $\{(0,t)\}$ indicates a sample-independent lifted critical point $(1, \barw, 0, 0)$. It arises due to the fact that $\tanh(0) = 0$. }
    \label{Fig: Sample-dep crit pt}
\end{figure}

\section{Conclusion and Discussion}\label{Section Conclusion and Discussion}

In this paper, we propose the sample-independent critical lifting operator (Definition \ref{Defn Sample-ind crit lifting}) and study the sample-independent/dependent lifted critical points. We first show by example that the previously studied critical embeddings may not produce all sample-independent lifted critical points. We then focused on sample-dependent lifted critical points, identifying a specific family of such points and proving that they are necessarily saddles when the loss is non-zero. The sample-independent critical lifting operator provides a way to study the structural aspects of loss landscape dictated purely by the network architecture. Our study of sample-independent critical points reveals the limitation of previously studied embedding operators, suggesting a more delicate relationship between neural networks of different widths. Our study of sample-dependent critical points provides insights into how samples affect the loss landscape.

The paper raises as many questions as the information it provides. First, for sample-independent critical points, we are unclear if all of them are produced by critical embedding operators (not limited to those previously studied ones). We conjecture that they fully characterize all sample-independent lifted critical points for one hidden layer neural networks. Meanwhile, it is interesting to investigate how the completeness of the characterization depends on the network architecture, e.g., choice of activation function, depth/width of network, etc. 

Second, we do not have a clear picture about sample-dependent lifted critical points for multi-layer neural networks. Recall that we have shown that all sample-dependent critical points must be of the form (\ref{eq 1 for Prop One hidden layer SDCP}), but a general form of these points is unclear for multi-layer networks. We expect the existence of additional sample-dependent critical points beyond what we discovered in the paper. Meanwhile, we are interested in the gradient dynamics near the sample-dependent saddles we discovered. Since they are necessarily degenerate and may not have a negative eigenvalue, previous results, e.g., those in \citet{Lee2017} cannot apply immediately. 

Third, a better understanding of the sample-independent lifting operator is needed. For example, our construction of sample-dependent lifted critical point requires a specific sample size threshold, which naturally leads to the question whether sample-dependent lifted critical points exist when we keep the sample size fixed while varying samples. More generally, one can study ``constrained sample-independent lifting operator'' concerning samples with fixed property. This would help us better understand how different aspects of data affect the loss landscape.

\section{Acknowledgement}

This work is sponsored by the National Key R\&D Program of China Grant No. 2022YFA1008200 (T. L., Y. Z.), the National Natural Science Foundation of China Grant No. 12101401 (T. L.), No.
12101402 (Y. Z.), Shanghai Municipal Science and Technology Key Project No. 22JC1401500 (T. L.), the Lingang Laboratory Grant No.LG-QS-202202-08 (Y. Z.), Shanghai Municipal of Science
and Technology Major Project No. 2021SHZDZX0102.

\bibliography{main}

\clearpage

\appendix

\section{Appendix}

\subsection{Preparing Lemmas}

\begin{lemma}\label{Lem Lin ind of neurons}
    Let $\sigma: \bR \to \bR$ be a non-polynomial analytic function. Then for any $d, n \in \bN$ and any $x_1, ...,x_n \in \bR^d \cut \{0\}$ with $x_i \pm x_j \ne 0$ for $1 \le i < j \le m$, the functions $\{w \mapsto \sigma(w \cdot x_i)\}_{i=1}^n$ are linearly independent. 
\end{lemma}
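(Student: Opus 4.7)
The plan is to reduce the multivariate identity to a one-dimensional linear independence problem via a generic direction, and then to exploit the non-polynomial analyticity of $\sigma$ through an asymptotic analysis of the resulting power sums.

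First, I would pass to one variable. Assume $\sum_{i=1}^n c_i \sigma(w \cdot x_i) = 0$ for all $w \in \bR^d$; the goal is to produce a direction $u \in \bR^d$ such that the scalars $b_i := u \cdot x_i$ satisfy $b_i \ne 0$ and $b_i \ne \pm b_j$ for all $i \ne j$. Such $u$ exists outside the finite union of proper hyperplanes $\{u : u \cdot x_i = 0\}$ and $\{u : u \cdot (x_i \pm x_j) = 0\}$; each is a proper hyperplane by the hypotheses $x_i \ne 0$ and $x_i \pm x_j \ne 0$, so the union has zero measure in $\bR^d$. Substituting $w = tu$ with $t \in \bR$ reduces the identity to $\sum_i c_i \sigma(b_i t) = 0$ for every $t$, where the $b_i$ are nonzero real numbers with pairwise distinct absolute values.

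Second, I would extract information from the Taylor expansion of $\sigma$ at the origin. Writing $\sigma(s) = \sum_{k \ge 0} a_k s^k$, analyticity yields $\sum_i c_i \sigma(b_i t) = \sum_k a_k \bigl( \sum_i c_i b_i^k \bigr) t^k$, so vanishing of the left side forces $a_k \sum_i c_i b_i^k = 0$ for every $k \ge 0$. Since $\sigma$ is non-polynomial, the index set $K := \{k : a_k \ne 0\}$ is infinite, so the power sums $v_k := \sum_i c_i b_i^k$ vanish for all $k \in K$.

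Third, I would close the argument with a dominant-term asymptotic analysis. Relabel the indices so that $|b_1| > |b_2| > \cdots > |b_n| > 0$, which is legitimate because the $|b_i|$ are pairwise distinct. If some $c_i$ is nonzero, let $i_0$ be the smallest index with $c_{i_0} \ne 0$; then for $i > i_0$ we have $|b_i|/|b_{i_0}| < 1$, and therefore $v_k = c_{i_0} b_{i_0}^k \bigl( 1 + O\bigl((|b_{i_0+1}|/|b_{i_0}|)^k\bigr)\bigr)$ as $k \to \infty$, which is nonzero for all sufficiently large $k$. This contradicts the vanishing of $v_k$ on the infinite set $K$, forcing $c_i = 0$ for every $i$.

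The delicate step is the first one: the hypothesis $x_i \pm x_j \ne 0$ is used crucially to ensure that for generic $u$ no two of the $b_i$ have the same absolute value, since if $x_i = -x_j$ were allowed then $b_i = -b_j$ for every $u$, and the dominant-term argument would collapse under the possible cancellation between $c_i b_i^k$ and $c_j b_j^k$ across even and odd $k$. The non-polynomiality of $\sigma$ enters only to guarantee that $K$ is infinite, which is exactly what the asymptotic step requires.
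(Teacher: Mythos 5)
Your proof is correct, and its core mechanism — pick a generic direction $u$ so that the scalars $b_i = u\cdot x_i$ are nonzero with pairwise distinct absolute values, expand in a power series at the origin, and kill the leading coefficient by a dominant-term growth estimate — is the same one the paper uses. The difference is in how the non-polynomiality of $\sigma$ is exploited. The paper splits into cases according to the parity structure of $\sigma$: for even or odd $\sigma$ it works only with derivative orders of a fixed parity (so that $(v\cdot x_i)^{s_j}=|v\cdot x_i|^{s_j}$ and the maximal term dominates), handles ``polynomial plus even/odd non-polynomial'' by differentiating enough times to reduce to that case, and for activations with no parity defers to an external lemma of \citet{BSimsek} under the weaker hypothesis that the $x_i$ are merely distinct. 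You instead observe that once the $|b_i|$ are pairwise distinct, the estimate $\bigl|\sum_{i>i_0} c_i b_i^k\bigr| \le \sum_{i>i_0}|c_i|\,|b_i|^k = o\bigl(|b_{i_0}|^k\bigr)$ makes the sign pattern of $b_i^k$ irrelevant, so you only need the index set $K=\{k: a_k\ne 0\}$ to be infinite — which is exactly non-polynomiality. This yields a single, self-contained argument covering all cases at once, at the cost of not recovering the paper's sharper ``if and only if'' refinements (e.g.\ that for activations with both even and odd non-polynomial parts, distinctness of the $x_i$ already suffices). One cosmetic remark: when $i_0=n$ your error term $O\bigl((|b_{i_0+1}|/|b_{i_0}|)^k\bigr)$ is vacuous and $v_k=c_n b_n^k\ne 0$ directly, so no harm is done.
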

\begin{proof}
    We will actually prove a slightly stronger result shown below: 

    \textit{Let $\sigma: \bR \to \bR$ be an analytic non-polynomial activation function. Then the following results hold for any $d, m \in \bN$ and any $x_1, ..., x_n \in \bR^d \cut \{0\}$}
    \begin{itemize}
        \item [(a-1)] \textit{When $\sigma$ is the sum of a non=zero polynomial and an even/odd analytic non-polynomial, $\{\sigma(w \cdot x_i)\}_{i=1}^n$ are linearly independent if $x_i \pm x_j \ne 0$.}
        \item [(a-2)] \textit{When $\sigma$ does not have parity and does not satisfy (a-1), then $\{\sigma(w \cdot x_i)\}_{i=1}^n$ are linearly independent if and only if $x_i$'s are distinct.}
        \item [(b)] \textit{When $\sigma$ is an even or odd function, $\{\sigma(w \cdot x_i)\}_{i=1}^n$ are linearly independent if and only if $x_i \pm x_j \ne 0$ for $1 \le i < j \le n$.}
    \end{itemize}

    The proof below deals with these cases. For (a-1) we have 
    \begin{itemize}
        \item $\sigma$ is the sum of a polynomial and an even, non-polynomial analytic function. Then $\sigma^{(s)}$, the $s$-th derivative of $\sigma$, is an even function for sufficiently large $s$. Since $x_i \pm x_j \ne 0$ for $1 \le i < j \le n$, there is some $v \in \bR^d$ such that $|x_i \cdot v|$ are distinct and non-zero. It follows from (b) that the (single-variable, even or odd) functions $\{z \mapsto (v \cdot x_i)^s \sigma^{(s)}((v \cdot x_i)z) \}_{i=1}^n$ are linearly independent. Thus, $\{z \mapsto \sigma((v \cdot x_i)z)\}_{i=1}^n$ and thus $\{\sigma(w \cdot x_i)\}_{i=1}^n$ are linearly independent. 

        \item $\sigma$ is the sum of a polynomial and an odd, non-polynomial analytic function. Then $\sigma^{(s)}$ is an odd function for sufficiently large $s$. Argue in the same way as in (a-1) we show the desired result. 
    \end{itemize}

    For (a-2), note that there are infinitely many even and odd numbers $s_{even}, s_{odd} \in \bN$, such that $\sigma^{(s_{even})}(0), \sigma^{(s_{odd})}(0) \ne 0$. Then the result follows from Lemma B.5 in \citet{BSimsek}. One can also refer to other works, such as \cite{LZhangGlobal}. 
    
    Then we prove (b). First assume that $\sigma$ is an even function. Then there are even, non-zero numbers $\{s_j\}_{j=1}^\infty$ such that $\sigma^{(s_j)}(0)$, the $s_j$-th derivative of $\sigma$ at $0$, is non-zero, for all $j \in \bN$. Given $x_1, ..., x_n \in \bR^d \cut \{0\}$ such that $x_i \pm x_j \ne 0$ for $1 \le i < j \le n$. Assume $\alpha_1, ..., \alpha_n \in \bR$ makes the linear combination of these neurons, $\sum_{i=1}^n \alpha_i \sigma(w \cdot x_i)$, a constant function. Since $x_i \pm x_j \ne 0$ for $1 \le i < j\le n$, there is some $v \in \bR^d$ such that $|x_i \cdot v|$ are distinct and non-zero. Therefore, 
    \[
        z \mapsto \sum_{i=1}^n \alpha_i \sigma\left( (v \cdot x_i)z \right) = \text{const.}, \quad \forall\, z \in \bR. 
    \]
    Rewriting this in power series expansion near the origin, we obtain 
    \begin{align*}
        \sum_{i=1}^n \alpha_i \sigma\left( (v \cdot x_i)z \right) = \sum_{s=0}^\infty \frac{\sigma^{(s)}(0)}{s!}\left( \sum_{i=1}^n \alpha_i \left( v \cdot x_i \right)^s \right) z^s = \text{const.}
    \end{align*}
    The power series holds for all $z$ in a sufficiently small open interval around $0$. Thus, we must have $\sigma^{(s_j)}(0) \sum_{i=1}^n \alpha_i \left(v \cdot x_i\right)^{s_j} = 0$ for all $j \in \bN$. Let $i_1 \in \{1, ..., n\}$ be (the unique number) such that $|v \cdot x_{i_1}| = \max_{1\le i \le n} |v \cdot x_i|$. If $\alpha_{i_1} \ne 0$ we would have 
    \[
        \sum_{i=1}^n \alpha_i \left( v \cdot x_i \right)^{s_j} = \Theta\left( v \cdot x_{i_1} \right)^{s_j} \to \infty 
    \]
    as $j \to \infty$. Thus, $\alpha_{i_1} = 0$ and we need only consider the rest $n-1$ neurons. Therefore, by an induction on $n$ we can see that $\alpha_1 = ... = \alpha_n = 0$. This proves the case for even activation. 

    Then assume that $\sigma$ is an odd function. Again, let $v \in \bR^d$ be such that $|v \cdot x_i|$'s are distinct and non-zero. Let $\alpha_1, ..., \alpha_n \in \bR$ be such that $\sum_{i=1}^n \alpha_i \sigma((v \cdot x_i)z)$ is a constant function in $z$. Its directional derivative along $v$ is given by 
    \[
        \frac{\mathrm{d}}{\mathrm{d}z} \left[ \sum_{i=1}^n \alpha_i \sigma \left( (v \cdot x_i)z \right) \right] = \sum_{i=1}^n \left( \alpha_i (v \cdot x_i)\right) \sigma'\left( (v \cdot x_i)z \right)
    \]
    must also be constant zero. Since $\sigma'$ is an even, analytic, non-polynomial function, our proof above shows that $\alpha_i (v \cdot x_i) = 0$ for all $1 \le i \le n$, which then implies $\alpha_i = 0$ for all $1 \le i \le n$. Therefore, the neurons are linearly independent. 

    Conversely, if $x_i - x_j = 0$ for some distinct $i, j$, then we obtain two identical neurons. If $x_i + x_j = 0$ then $\sigma(w \cdot x_i) = \sigma(w \cdot x_j)$ for even function $\sigma$ and $\sigma(w \cdot x_i) + \sigma(w \cdot x_j) = 0$ for odd activation $\sigma$. In either case we obtain two linearly dependent neurons. This completes the proof. 
\end{proof}

\begin{lemma}\label{Lem A condition of saddle}
    Let $N \in \bN$ and $g: \bR^N \to \bR$ a smooth function. Let $x^* \in \bR^N$ be a critical point of $g$ such that for any neighborhood $U$ of $x^*$, there is some $x \in U$ with $\nabla g(x) \ne 0$ and $g(x) = g(x^*)$. Then $x^*$ is a saddle. 
\end{lemma}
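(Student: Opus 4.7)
The strategy is to reduce the claim to showing that $x^*$ is neither a local minimum nor a local maximum of $g$; together with $x^*$ being critical, that is exactly the working definition of a saddle. The hypothesis is tailor-made for this: it guarantees that in every neighborhood of $x^*$ one can escape to a nearby point at the same height but with nonzero gradient, and from such a point the gradient itself produces motions in both the ascending and descending directions.

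Concretely, given any neighborhood $U$ of $x^*$, I would first invoke the hypothesis to pick $x \in U$ with $\nabla g(x) \neq 0$ and $g(x) = g(x^*)$. Since $U$ is open, choose $r > 0$ so that the open ball $B(x, r)$ is contained in $U$. Setting $v := \nabla g(x) / |\nabla g(x)|$, a first-order Taylor expansion gives
\[
g(x \pm t v) \;=\; g(x^*) \;\pm\; t\, |\nabla g(x)| \;+\; O(t^2) \qquad (t \to 0).
\]
Hence for all sufficiently small $t \in (0, r)$, the points $x + tv$ and $x - tv$ both lie in $B(x, r) \subseteq U$, and they satisfy $g(x + tv) > g(x^*)$ and $g(x - tv) < g(x^*)$.

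Because $U$ was arbitrary, every neighborhood of $x^*$ contains a point where $g$ is strictly greater than $g(x^*)$ and a point where $g$ is strictly less; this rules out both local extrema and, combined with criticality, certifies $x^*$ as a saddle. I do not anticipate a real obstacle here, since the argument is essentially one invocation of Taylor's theorem and requires only $C^1$ regularity rather than full smoothness; the only mild care lies in choosing two parameters in the right order (first $x \in U$ with nonzero gradient at the level $g(x^*)$, then a ball around $x$ inside $U$, then $t$ small enough to stay in that ball), which is routine.
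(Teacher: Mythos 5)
Your proof is correct and follows essentially the same route as the paper's: both use the hypothesis to locate a point $x$ in the given neighborhood with $g(x) = g(x^*)$ and $\nabla g(x) \neq 0$, and then move off $x$ along the gradient direction to produce points where $g$ is strictly above and strictly below $g(x^*)$. The only difference is cosmetic — you use a first-order Taylor expansion along the line $x \pm t\,\nabla g(x)/|\nabla g(x)|$, while the paper runs the gradient flow from $x$ in both time directions; your version is if anything slightly more elementary.
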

\begin{proof}
    We will show that any neighborhood $U$ of $x^*$ contains points $y_1, y_2$ with $g(y_1) < g(x^*) < g(y_2)$. So fix $U$. Choose an $x \in U$ with $\nabla g(x) \ne 0$ and $g(x) = g(x^*)$. Since $\nabla g(x) \ne 0$, the gradient flow $\gamma: [0, \infty) \to \infty$ starting at $x$ is not static; moreover, for some small $\delta > 0$ we have $\gamma[0, \delta) \siq U$. Since the value of $g$ is (strictly) decreasing along $\gamma$, we may choose $y_1 := \gamma(\frac{\delta}{2})$, because 
    \[
        g\left(\gamma\left(\frac{\delta}{2}\right)\right) < g(\gamma(0)) = g(x) = g(x^*). 
    \]
    Similarly, we can find some $y_2 \in U$ with $g(y_2) > g(x^*)$. 
\end{proof}

\begin{defn}[(real) analytic function, rephrase of Defn. 2.2.1 in \citet{Primer}]
    Let $N, M \in \bN$ and $\Omega \siq \bR^N$ be open. A function $f: \Omega \to \bR$ is (real) analytic if for each $x \in \Omega$, $f$ can be represented by a convergent multi-variable power series in some neighborhood of $x$. Similarly, a function $f: \Omega \to \bR^M$ is (real) analytic if each of its components is real analytic. 
\end{defn}
\begin{remark}
    Let $\Omega$ and $U$ be open, and $f,g: \Omega \to \bR$, $h: U \to \Omega$ be analytic functions. By Proposition 2.2.2 and Proposition 2.2.8 in \citet{Primer}, $\alpha f + \beta g, fg, f\circ h$ are analytic functions, i.e., analyticity is preserved by linear combination, multiplication and composition among analytic functions. Moreover, by Proposition 2.2.3 in \citet{Primer}, the partial derivatives of an analytic function are also analytic. In particular, this means when $\sigma$ and $\ell$ are analytic, the neural network, the loss function, and the partial derivatives of the loss function are analytic. 
\end{remark}

The following lemma is of great importance for the proofs in Section \ref{Section Proof of results}. 

\begin{lemma}[\citet{ZerosetAnalyticFunc}]\label{Lem Zero set of analytic function}
    Let $N \in \bN$, $\Omega \siq \bR^N$ be open and $f: \Omega \to \bR$ be analytic. Then either $f$ is constant zero on $\Omega$, or $f^{-1}(0)$ has zero measure in $\Omega$. 
\end{lemma}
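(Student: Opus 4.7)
The plan is to prove the dichotomy by induction on the ambient dimension $N$, using Fubini's theorem for the inductive step. Without loss of generality I reduce to the case where $\Omega$ is connected by working one connected component at a time. The main tool I repeatedly invoke is the real-analytic identity theorem: on a connected open set, an analytic function that vanishes on a nonempty open subset vanishes identically. This is a standard consequence of the uniqueness of convergent power-series expansions together with a connectedness argument, and I will take it as known.

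For the base case $N = 1$, a connected open subset of $\bR$ is an open interval, and the identity theorem forces every zero of a non-identically-zero analytic function to be isolated. Hence the zero set is countable and has Lebesgue measure zero.

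For the inductive step, assume the result in dimension $N-1$ and let $f : \Omega \to \bR$ be analytic on connected $\Omega \siq \bR^N$ with $f \not\equiv 0$. Cover $\Omega$ by countably many open boxes $B = B' \times I$ with $B' \siq \bR^{N-1}$ open and $I \siq \bR$ an open interval, each contained in $\Omega$; it suffices to prove $|f^{-1}(0) \cap B| = 0$ for every such $B$. Fix any $t_0 \in I$ and define
\begin{equation*}
    Z' \;:=\; \{\, x' \in B' : f(x', t) = 0 \text{ for all } t \in I \,\}.
\end{equation*}
By the one-variable identity theorem expanded around $t_0$, $x' \in Z'$ exactly when $\partial_{x_N}^k f(x', t_0) = 0$ for every $k \ge 0$. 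For $x' \in B' \setminus Z'$ the slice $t \mapsto f(x', t)$ is a non-identically-zero analytic function on the interval $I$, so by the base case its zero set in $I$ has one-dimensional measure zero. Fubini then reduces the goal to showing that $Z'$ itself has $(N-1)$-dimensional measure zero.

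The subtlety, and the main obstacle, is precisely this last step: the ``fully vanishing slices'' each contribute the full length $|I|$ to the zero set, so I must rule out that $Z'$ has positive $(N-1)$-measure. Here I use that $f \not\equiv 0$ on $B$, which follows from the identity theorem applied to the connected $\Omega$. If \emph{every} analytic function $x' \mapsto \partial_{x_N}^k f(x', t_0)$ were identically zero on $B'$, then every slice of $f$ over $B$ would vanish, forcing $f \equiv 0$ on $B$ and contradicting this. So I may pick some $k_0$ for which $\partial_{x_N}^{k_0} f(\cdot, t_0) \not\equiv 0$ on $B'$; by the inductive hypothesis applied to this analytic function in $N-1$ variables, its zero set has $(N-1)$-dimensional measure zero, and $Z' \siq \{\, x' \in B' : \partial_{x_N}^{k_0} f(x', t_0) = 0\,\}$ inherits this, closing the induction by Fubini.
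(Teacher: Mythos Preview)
Your argument is correct. The paper does not give its own proof of this lemma; it merely states it with a citation to the literature, so there is no in-paper proof to compare against. Your induction on $N$ via Fubini, slicing in the last coordinate and using the one-variable identity theorem to control the ``fully vanishing slice'' set $Z'$ through a single nonvanishing $\partial_{x_N}^{k_0} f(\cdot, t_0)$, is the standard route to this result. The only point worth flagging is exactly the one you already handled: you need $f \not\equiv 0$ on each covering box $B$, and you correctly extract this from the identity theorem on the connected $\Omega$. With that in place, the inductive hypothesis applies to $\partial_{x_N}^{k_0} f(\cdot, t_0)$ on $B'$ and the Fubini computation closes the argument.
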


\begin{lemma}\label{Lem Ran of inner loss, dim-1 output case}
    Let $\ell: \bR^2 \to \bR$ be a function satisfying Assumption \ref{Asmp Inner loss}. Further assume that $\ell(p,q) = \ell(p-q, 0)$ for all $(p,q) \in \bR^2$. Then the range of $\partial_p \ell(p,\cdot)$ contains an open interval around $0$ for every $p \in \bR$. 
\end{lemma}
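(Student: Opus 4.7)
My plan is to reduce this two-variable problem to a single-variable one by exploiting the translation invariance $\ell(p,q) = \ell(p-q,0)$. First I would define $f: \bR \to \bR$ by $f(t) := \ell(t,0)$, so that $\ell(p,q) = f(p-q)$ and consequently $\partial_p \ell(p,q) = f'(p-q)$. For any fixed $p \in \bR$, the map $q \mapsto p-q$ is a bijection of $\bR$ onto itself, so the range of $\partial_p \ell(p, \cdot)$ is exactly the range of $f'$. It therefore suffices to show that $f'(\bR)$ contains an open interval around $0$. Assumption \ref{Asmp Inner loss} transfers to the following properties of $f$: $f$ is analytic, $f \ge 0$ with $f(t) = 0$ iff $t = 0$, and $f'(t) = 0$ iff $t = 0$.

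Next I would determine the sign of $f'$ on each half-line. Since $f'$ is continuous and nowhere zero on $(0,\infty)$, the intermediate value theorem forces $f'$ to have constant sign on $(0,\infty)$; the same argument applies on $(-\infty,0)$. Because $t = 0$ is the strict global minimum of $f$, we have $f(t) > f(0) = 0$ for every $t > 0$, and the mean value theorem then yields a point in $(0,t)$ at which $f'$ is strictly positive. Hence $f' > 0$ on all of $(0,\infty)$, and a symmetric argument gives $f' < 0$ on $(-\infty,0)$.

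Finally, I would close with one more application of the intermediate value theorem. Pick any $\delta > 0$; then $f'$ is continuous on $[-\delta,\delta]$ with $f'(-\delta) < 0$, $f'(0) = 0$, and $f'(\delta) > 0$. Hence $f'([-\delta,\delta])$ is a connected subset of $\bR$ containing both $f'(-\delta)$ and $f'(\delta)$, so it contains the open interval $(f'(-\delta), f'(\delta))$, which is an open interval around $0$. Since this interval lies in the range of $f'$, and hence in the range of $\partial_p \ell(p, \cdot)$ for every $p$, the lemma follows.

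The proof is essentially routine once the one-variable reduction is in place; the only step requiring slight care is the sign determination of $f'$, where one must jointly exploit the strict-minimum hypothesis $\ell(p,q) = 0 \iff p = q$ and the nonvanishing hypothesis on $\partial_p \ell$. These two conditions together are exactly what is needed to rule out a scenario like $f(t) = -t^2$, which would satisfy the derivative condition but not the minimum condition, so neither hypothesis can be dropped.
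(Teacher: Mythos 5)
Your proof is correct and follows essentially the same route as the paper: reduce to the single-variable function $f(t)=\ell(t,0)$, show $f'$ is negative to the left of $0$ and positive to the right, and conclude via the intermediate value theorem that the range of $f'$ contains an open interval around $0$. The only difference is in the sign determination: the paper invokes analyticity to make the zeros of $f'$ discrete and argues locally near $0$, whereas you use the hypothesis $\partial_p\ell(p,q)=0 \iff p=q$ together with the mean value theorem to fix the sign on each entire half-line, which is slightly more elementary and does not need analyticity for that step.
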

\begin{proof}
    Note that we can write $\ell(p,q) = u(p - q)$ for an analytic function $u: \bR \to [0, \infty)$, such that $u$ is not constant zero and $u(z) = 0$ if and only if $z = 0$. Since $u$ achieves its minimum at $z = 0$, there is an interval $I$ containing $0 \in \bR$ such that $\frac{\mathrm{d}u}{\mathrm{d}z}(z) \ge 0$ for $z \in (0, \infty) \cap I$ and $\frac{\mathrm{d}u}{\mathrm{d}z}(z) \le 0$ for $z \in (-\infty, 0) \cap I$. Moreover, $z = 0$ is a zero of $\frac{\mathrm{d}u}{\mathrm{d}z}$. Since $u$ is analytic and not constant zero, the zeroes of $\frac{\mathrm{d}u}{\mathrm{d}z}$ is discrete, so by shrinking $I$ if necessary, we would have $\frac{\mathrm{d}u}{\mathrm{d}z}(z) > 0$ for $z \in (0, \infty) \cap I$ and $\frac{\mathrm{d}u}{\mathrm{d}z}(z) < 0$ for $z \in (-\infty) \cap I$. This shows that the range of $\frac{\mathrm{d}u}{\mathrm{d}z}$ contains an open interval around $0$. 

    Now $\partial_p \ell(p,q) = \frac{\mathrm{d}u}{\mathrm{d}z}(p-q)$. Thus, 
    \[
        \mathrm{ran} \partial_p \ell(p, \cdot) = \mathrm{ran} \left[ \frac{\mathrm{d}u}{\mathrm{d}z}(p - \cdot)\right] = \mathrm{ran} \frac{\mathrm{d}u}{\mathrm{d}z}. 
    \]
    It follows that the range of $\partial_p \ell(p, \cdot)$ contains an open interval around $0$. 
\end{proof}

\begin{lemma}\label{Lem Binary cross-entropy}
    Let $\ell(p,q) = q \log p + (1-q) \log (1-p)$ for $p,q \in (0,1)$. Then the range of $\partial_p \ell(p, \cdot)$ contains an open interval around $0$ for every $p \in \bR$. 
\end{lemma}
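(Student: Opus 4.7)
The proof is an explicit computation. I would begin by differentiating $\ell(p,q)=q\log p + (1-q)\log(1-p)$ with respect to its first argument to obtain
\[
    \partial_p \ell(p,q) = \frac{q}{p} - \frac{1-q}{1-p} = \frac{q(1-p)-p(1-q)}{p(1-p)} = \frac{q-p}{p(1-p)}.
\]
Since $p \in (0,1)$ is fixed and the denominator $p(1-p)$ is a positive constant, $q \mapsto \partial_p\ell(p,q)$ is an affine function of $q$ with nonzero slope.

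Next I would read off the range. As $q$ varies over the open interval $(0,1)$, the numerator $q-p$ varies over the open interval $(-p,1-p)$, which contains $0$ (since $p\in(0,1)$). Dividing by the positive constant $p(1-p)$, the range of $\partial_p\ell(p,\cdot)$ is the open interval
\[
    \left( -\frac{1}{1-p},\ \frac{1}{p} \right),
\]
whose endpoints have opposite signs, so it is an open neighborhood of $0$ in $\mathbb{R}$. This yields the conclusion.

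\textbf{Expected obstacle.} There is essentially none: the only subtlety is the implicit domain of $\ell$, namely $p,q\in(0,1)$ rather than $\bR$, which one should flag when interpreting the statement ``for every $p\in\bR$'' in the lemma. Once the domain is correctly identified, the computation of $\partial_p\ell$ and the affine-in-$q$ structure make the range description immediate, so no further analytic machinery (unlike in Lemma~\ref{Lem Ran of inner loss, dim-1 output case}, which required isolating zeros of $\mathrm{d}u/\mathrm{d}z$) is needed.
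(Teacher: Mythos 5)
Your proof is correct and follows essentially the same route as the paper's: both compute $\partial_p\ell(p,q)=\frac{q}{p}-\frac{1-q}{1-p}$, observe it is affine in $q$ with strictly positive slope and vanishes at $q=p$, and conclude the range contains an open interval around $0$ (you merely make the range $\left(-\tfrac{1}{1-p},\tfrac{1}{p}\right)$ explicit). Your remark about the domain $p,q\in(0,1)$ versus the lemma's phrasing ``for every $p\in\bR$'' is a fair observation but does not change the argument.
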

\begin{proof}
    This follows from a straightforward computation. Note that $\partial_p \ell(p,q) = \frac{q}{p} - \frac{1-q}{1-p}$ and for each $p$, the derivative of $q \mapsto \partial_p \ell(p,q)$ is a strictly positive constant $\frac{1}{p} + \frac{1}{1 - p}$. Since $\partial_p \ell(p,p) = 0$, this implies that for $q$ in a neighborhood $I$ around $p$, $\partial_p \ell(p, I)$ contains an open interval around $0$. 
\end{proof}

\iffalse
\begin{lemma}\label{Lem Discrete cross-entropy}
    Let $\ell: \bR^D \times \bR^D \to \bR$ be given by $\ell(p,q) = \sum_{j=1}^D q_j \log p_j + (1 - \sum_{j=1}^D q_j) \log \left( 1 - \sum_{j=1}^D p_j \right)$. Then the range of $\partial_p \ell$ contains a neighborhood of $0 \in \bR^D$. 
\end{lemma}
\begin{proof}
    The proof is similar to that of Lemma \ref{Lem Binary cross-entropy} above. We have 
    \[
        \partial_j \ell(p, q) = \frac{q_j}{p_j} - \frac{1 - \sum_{k=1}^D q_k}{1 - \sum_{k=1}^D p_k}, \quad \forall\, 1 \le j \le D. 
    \]
    Therefore, $\partial_q \partial_p \ell(p,q)$ is the sum of the following two constant matrices: 
    \[
        \partial_q \partial_p \ell(p,q) = 
        \begin{pmatrix}
            \frac{1}{p_1} &\, &\, \\
            \, &\ddots &\, \\
            \, &\, &\frac{1}{p_D}
        \end{pmatrix} + 
        \maThree{\frac{1}{1 - \sum_{k=1}^D p_k}}{...}{\frac{1}{1 - \sum_{k=1}^D p_k}}{\vdots}{\ddots}{\vdots}{\frac{1}{1 - \sum_{k=1}^D p_k}}{...}{\frac{1}{1 - \sum_{k=1}^D p_k}}
    \]
\end{proof}
\fi

\subsection{Proof of Results}\label{Section Proof of results}

\begin{prop}[Example in Section \ref{Subsection Sample independent crit pt}]\label{Prop Converse of EP}
    Assume that $\sigma(0) = 0$. For two three hidden layer neural networks, neither the splitting embedding, nor the null embedding operator, nor general compatible embedding operator produce all sample-independent lifted critical points.  
\end{prop}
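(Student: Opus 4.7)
The plan is to exhibit a parameter in $E_{\text{wide}}$ that is a sample-independent lifted critical point but is not the image of any $\theta_{\text{narr}} \in E_{\text{narr}}$ under splitting, null, or general compatible embeddings. The proof proceeds in three steps: verify that every element of $E_{\text{narr}} \cup E_{\text{wide}}$ is sample-independent critical, characterize the image of $E_{\text{narr}}$ under each of the three embedding types, and construct a counterexample in $E_{\text{wide}}$ that violates every such characterization.

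First I would combine $\sigma(0) = 0$ with the vanishing of the first- and second-layer weights on $E_{\text{narr}}$ and $E_{\text{wide}}$ to show by induction on layer index that $H^{(l)}(\theta, \cdot) \equiv 0$ for $l = 1, 2, 3$ and hence $H(\theta, \cdot) \equiv 0$. A chain-rule computation then shows each partial $\partial H / \partial \theta_i$ at such $\theta$ carries either a factor $H^{(l)}_{k_l} = 0$ (for derivatives with respect to $w^{(3)}_{k_3 k_2}$ or $a_{1 k_3}$) or a factor $w^{(2)}_{k_2 k_1} = 0$ (for derivatives with respect to $w^{(1)}_{k_1}$). Hence $\nabla_\theta H(\theta, x_i) = 0$ for every $x_i$, so $\nabla R_S(\theta) = 0$ for every sample set $S$, confirming sample-independent criticality of every $\theta \in E_{\text{narr}} \cup E_{\text{wide}}$.

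Second, I would describe the image of $E_{\text{narr}}$ under each embedding. The splitting embedding at the third hidden layer picks $k^* \in \{1, \dots, m_3\}$ and a scalar $\alpha$, producing a $\theta_{\text{wide}}$ whose added incoming weight equals $w^{(3)}_{k^*}$ and whose output weights satisfy $a'_{1, k^*} + a'_{1, m_3+1} = a_{1, k^*}$. The null embedding has two variants (both using $\sigma(0) = 0$): either the added output weight $a'_{1, m_3+1}$ vanishes with the added incoming weight free, or the added incoming weight vanishes with $a'_{1, m_3+1}$ free. The general compatible embedding of \citet{EPHierarchical} is built by composing these elementary operations together with permutations and reweightings, so in every image the added incoming weight still lies in $\{0\} \cup \{w^{(3)}_{k_3}\}_{k_3=1}^{m_3}$ whenever the added output weight is nonzero.

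Third, I would choose $\theta_{\text{wide}}^* \in E_{\text{wide}}$ whose $m_3+1$ incoming weights at layer 3 are sufficiently generic (nonzero, pairwise distinct, and with pairwise sums nonzero) and whose $m_3+1$ output weights are all nonzero. This choice immediately rules out the splitting image (which would force the added incoming weight to coincide with some existing one), the input-null image (which would force it to vanish), and the output-null image (which would force the added output weight to vanish). The main obstacle is dispatching the general compatible embedding rigorously, as its formal definition allows for affine recombinations. I would handle this using Lemma \ref{Lem Lin ind of neurons}: if the added neuron at layer 3 had nonzero output coefficient and an incoming weight outside $\{0\} \cup \{w^{(3)}_{k_3}\}_{k_3=1}^{m_3}$, then output preservation applied with arbitrary layer-2 representations would yield a nontrivial linear relation among the functions $z \mapsto \sigma(w^{(3)}_{k_3} \cdot z)$, contradicting their linear independence for non-polynomial analytic $\sigma$.
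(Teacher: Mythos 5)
Your proposal is correct and follows essentially the same route as the paper: use $\sigma(0)=0$ to show that every parameter with vanishing first- and second-layer weights yields the zero output function and has $\nabla R_S = 0$ for every sample set, then observe that the images of splitting, null, and (one-neuron-wider) compatible embeddings form a proper subset of this critical set. The only cosmetic difference is that you dispatch the general compatible embedding by a direct linear-independence argument, where the paper instead invokes the reduction of a one-neuron-wider compatible embedding to a splitting or null embedding from the cited references; both close the same final step.
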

\begin{proof}
    Let $H$ be a three hidden layer neural network with $d$ ($d \in \bN$ is arbitrary) dimensional input, one dimensional output, and hidden width $\{m_1, m_2, m_3\}$. Thus, $H$ can be written as 
    \begin{align*}
        H(\theta, x) &= \sum_{k_3=1}^{m_3} a_{1 k_3} \sigma\left( \sum_{k_2=1}^{m_2} w_{k_3 k_2}^{(3)} \sigma\left( \sum_{k_1=1}^{m_1} w_{k_2 k_1}^{(2)} \sigma(w_{k_1}^{(1)} \cdot x) \right)\right). 
    \end{align*}
    Fix arbitrary samples $(x_i, y_i)_{i=1}^n$. Consider parameters for $H$ of the form
    \begin{equation}\label{eq 1 for Prop Converse of EP}
        \theta = \left( (a_{1 k_3})_{k_3=1}^{m_3}, (w_{k_3}^{(3)})_{k_3=1}^{m_3}, 0, 0 \right). 
    \end{equation}
    Namely, all the $w_{k_2}^{(2)}$ and $w_{k_1}^{(1)}$'s are zero vectors. Then, using $\sigma(0) = 0$ we can inductively see that $H^{(1)}(\theta^{(1)}, x) = 0 \in \bR^{m_1}$, $H^{(2)}(\theta^{(2)}, x) = 0 \in \bR^{m_2}$ and $H^{(3)}(\theta^{(3)}, x) = 0 \in \bR^{m_3}$ for all $x$. The partial derivatives for $R$ are as follows. Here $\partial_p \ell$ denotes the partial derivative of $\ell$ with respect to its first entry (note that $\ell: \bR \times \bR \to \bR$). 
    \begin{align*}
        \parf{R}{a_{1 \bark_3}} 
        &= \sum_{i=1}^n \partial_p \ell(H(\theta, x_i), y_i) H_{\bark_3}^{(3)}(\theta^{(3)}, x_i) = 0. \\
        \parf{R}{w_{\bark_3 \bark_2}^{(3)}} 
        &= \sum_{i=1}^n \partial_p \ell(H(\theta, x_i), y_i) \cdot a_{1 \bark_3} \sigma'\left(w_{\bark_3}\cdot H^{(2)}(\theta^{(2)}, x_i)\right) H_{\bark_2}^{(2)}(\theta^{(2)}, x_i) \\ 
        &= \sum_{i=1}^n \partial_p \ell(H(\theta, x_i), y_i) \cdot a_{1 \bark_3} \sigma'(0) \sigma(0) = 0. \\ 
        \parf{R}{w_{\bark_2 \bark_1}^{(2)}} 
        &= \sum_{i=1}^n \partial_p \ell(H(\theta, x_i), y_i) \\
        &\,\,\,\cdot \sum_{k_3=1}^{m_3} a_{1 k_3} \sigma'\left(w_{k_3}^{(3)} \cdot H^{(2)}(\theta^{(2)}, x_i)\right) w_{k_3 \bark_2}^{(3)} \sigma'\left(w_{\bark_2} \cdot H^{(1)}(\theta^{(1)}, x_i)\right) \sigma(w_{\bark_1}^{(1)} \cdot x_i) \\
        &= \sum_{i=1}^n \partial_p \ell(H(\theta, x_i), y_i) \cdot \sum_{k_3=1}^{m_3} a_{1 k_3}\sigma'(0) w_{k_3 \bark_2} \sigma'(0) \sigma(0) = 0. \\
        \parf{R}{w_{\bark_1 \bark_0}^{(1)}} 
        &= \sum_{i=1}^n \partial_p \ell(H(\theta, x_i), y_i) \\
        &\,\,\,\cdot \sum_{k_3=1}^{m_3} a_{1 k_3} \sigma'\left(w_{k_3}^{(3)} \cdot H^{(2)}(\theta^{(2)}, x_i)\right) \sum_{k_2=1}^{m_2} w_{k_3 k_2}^{(3)} \sigma'\left(w_{k_2}^{(2)} \cdot H^{(1)}(\theta^{(1)}, x)\right) w_{k_2 \bark_1}^{(2)} \sigma'(w_{\bark_1} \cdot x_i) (x_i)_{\bark_0} \\
        &= 0 \quad (\text{because $w_{k_2 \bark_1}^{(2)} = 0$ for all $k_2$}). 
    \end{align*}
    In other words, we show that any parameter satisfying (\ref{eq 1 for Prop Converse of EP}) is a critical point of the loss function, regardless of samples. 
    
    Now consider two three hidden layer networks $H, H'$ both with input dimension $d$, output dimension $D$, and hidden layer widths $\{m_l\}_{l=1}^L, \{m_l'\}_{l=1}^L$, respectively. Assume that $m_1' = m_1, m_2' = m_2$, $m_2 > 1$ and $m_3' = m_3 + 1$. In this case, $H'$ is just one neuron wider than $H$ and the embedding of parameters from that of $H$ to $H'$ by general compatible embedding is just splitting embedding or null-embedding. For splitting embedding, note that for any $\theta$ satisfying (\ref{eq 1 for Prop Converse of EP}), up to permutation of entries a parameter $\theta'$ given by EP and satisfying (\ref{eq 1 for Prop Converse of EP}) takes the form 
    \[
        \theta' = \left( (a_{1 k_3})_{k_3=1}^{m_3}, (w_1^{(3)}, ..., \delta w_{m_3}^{(3)}, (1-\delta) w_{m_3+1}^{(3)}), 0, 0 \right)
    \]
    for some $\delta \in \bR$. In particular, $\delta w_{m_3}^{(3)}, (1-\delta) w_{m_3}^{(3)}$ are parallel vectors in $\bR^{m_2}$. However, because $m_2 > 1$, not every $\theta'$ satisfying (\ref{eq 1 for Prop Converse of EP}) has two parallel $w_{k_3}^{(3)}$'s. For null embedding, the weight it assigns to the extra neuron is fixed to 0. Thus, these two embedding operators (altogether) do not produce all sample-independent lifted critical points. 
\end{proof}
\begin{remark}
    Using the same proof idea, we can show that for two arbitrary $L \ge 3$ hidden layer neural networks, not all sample-independent lifted critical points are produced by these embedding operators. 
\end{remark}

\begin{prop}[Proposition \ref{Prop SDCP are saddles} in Section \ref{Subsection Sample dependent crit pt}]\label{Aprop SDCP are saddles, one hidden layer case}
    Given samples $(x_i, y_i)_{i=1}^n$ such that $x_i \ne 0$ for all $i$ and $x_i \pm x_j \ne 0$ for $1 \le i < j \le n$. Given integers $m, m'$ such that $m < m'$. For any critical point $\theta_{\text{narr}} = (a_k, w_k)_{k=1}^m$ of the loss function corresponding to the samples such that $R(\theta_{\text{narr}}) \ne 0$, the set of $(w_k')_{k=m+1}^{m'} \in \bR^{(m'-m)d}$ of weights making the parameter
    \[
        \theta_{\text{wide}} = (a_1, w_1, ..., a_m, w_m, 0, w_{m+1}', ..., 0, w_{m'}') 
    \]
    a critical point for the loss function has zero measure in $\bR^{(m'-m)d}$. Furthermore, any such critical point is a saddle. 
\end{prop}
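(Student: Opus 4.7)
The plan is to separate the statement into the measure-zero claim and the saddle claim, both of which hinge on the key identity $H(\theta_{\text{wide}}, x) = H(\theta_{\text{narr}}, x)$ that holds precisely because all inserted output weights $a_k'$ ($k > m$) vanish.

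For the measure-zero claim, I would focus on a single, well-chosen partial derivative, namely $\parf{R}{a_{m'}'}$. Using $H(\theta_{\text{wide}}, x) = H(\theta_{\text{narr}}, x)$ one obtains
\[
    \parf{R}{a_{m'}'}(\theta_{\text{wide}}) \;=\; \sum_{i=1}^n \beta_i\, \sigma(w_{m'}' \cdot x_i), \qquad \beta_i := \partial_p \ell(H(\theta_{\text{narr}}, x_i), y_i).
\]
Since $R(\theta_{\text{narr}}) \ne 0$, Assumption \ref{Asmp Inner loss} forces $H(\theta_{\text{narr}}, x_i) \ne y_i$ for some $i$ and hence $\beta_i \ne 0$, so the right-hand side is a non-trivial linear combination of the neurons $w \mapsto \sigma(w \cdot x_i)$. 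The geometric hypotheses $x_i \ne 0$ and $x_i \pm x_j \ne 0$ let me invoke Lemma \ref{Lem Lin ind of neurons} to conclude that these neurons are linearly independent, making $w_{m'}' \mapsto \parf{R}{a_{m'}'}(\theta_{\text{wide}})$ a non-identically-zero analytic function on $\bR^d$. Lemma \ref{Lem Zero set of analytic function} then gives that its zero set has measure zero in $\bR^d$, and Fubini upgrades this to measure zero in $\bR^{(m'-m)d}$; the set of critical completions is contained in this zero set and hence has measure zero.

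For the saddle claim, I would invoke Lemma \ref{Lem A condition of saddle}. The point is that perturbing $(w_k')_{k > m}$ alone while keeping the inserted $a_k'$ equal to $0$ preserves the identity $H(\theta_{\text{wide}}, \cdot) = H(\theta_{\text{narr}}, \cdot)$ and thus preserves $R$; meanwhile, by the measure-zero result just established, within any neighborhood of a critical $\theta_{\text{wide}}$ one can find such perturbations that destroy criticality. Therefore every neighborhood of $\theta_{\text{wide}}$ contains a parameter with the same loss value but non-zero gradient, and Lemma \ref{Lem A condition of saddle} yields that $\theta_{\text{wide}}$ is a saddle.

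The main obstacle is the non-triviality step: guaranteeing that the single chosen partial derivative $\parf{R}{a_{m'}'}(\theta_{\text{wide}})$, viewed as a function of $w_{m'}'$, is not identically zero. This requires combining the non-vanishing-loss hypothesis (through Assumption \ref{Asmp Inner loss}, to ensure some $\beta_i \ne 0$) with linear independence of the neurons (Lemma \ref{Lem Lin ind of neurons}, which in turn requires the geometric condition on the $x_i$'s and that $\sigma$ is non-polynomial analytic). Once this is secured, everything else — Fubini, analyticity of the zero set, and the saddle criterion — is routine. A pleasant feature of routing through Lemma \ref{Lem A condition of saddle} is that one never has to inspect the Hessian at $\theta_{\text{wide}}$, which is reassuring since such critical points are typically degenerate along the output-preserving directions.
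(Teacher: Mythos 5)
Your proposal is correct and follows essentially the same route as the paper's proof: you single out the partial derivative $\parf{R}{a_{m'}'}$, express it as a linear combination of $\{w\mapsto\sigma(w\cdot x_i)\}_{i=1}^n$ with coefficients $\partial_p\ell(H(\theta_{\text{narr}},x_i),y_i)$, use Assumption \ref{Asmp Inner loss} and Lemma \ref{Lem Lin ind of neurons} to show this analytic function is not identically zero, apply Lemma \ref{Lem Zero set of analytic function} for the measure-zero claim, and appeal to Lemma \ref{Lem A condition of saddle} for the saddle claim. The only cosmetic difference is that you state the Fubini step from $\bR^d$ to $\bR^{(m'-m)d}$ explicitly, which the paper leaves implicit.
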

\begin{proof}
    Denote $\theta_{\text{wide}} := (a_k', w_k')_{k=1}^m$, so by hypothesis we have $a_k' = 0$ for all $m < k \le m'$. Note that for any $(w_k')_{k=m+1}^{m'}$, $\theta_{\text{wide}}$ preserves output function, i.e., $H(\theta_{\text{wide}}, x) = H(\theta_{\text{narr}}, x)$ for all $x$. Thus, for any $w_{m'}' \in \bR^d$, the partial derivative for $a_{m'}'$ is given by 
    \begin{align*}
        \parf{R}{a_{m'}'}(\theta_{\text{wide}}) 
        &= \sum_{i=1}^n \partial_p \ell(H(\theta_{\text{wide}}, x_i), y_i) \sigma(w_{m'}' \cdot x_i) \\ 
        &= \sum_{i=1}^n \partial_p \ell(H(\theta_{\text{narr}}, x_i), y_i) \sigma(w_{m'}' \cdot x_i). 
    \end{align*}
    Define 
    \[
        \varphi(w_{m'}') = \sum_{i=1}^n \partial_p \ell(H(\theta_{\text{narr}}, x_i), y_i) \sigma(w_{m'}' \cdot x_i), 
    \]
    so that $\parf{R}{a_{m'}'}(\theta_{\text{wide}}) = 0$ if and only if $\varphi(w_{m'}') = 0$. Since i) $\sigma$ is a non-polynomial analytic function, ii) $x_i \ne 0$ for all $i$, and iii) $x_i \pm x_j \ne 0$ for all $1 \le i < j \le n$, by Lemma \ref{Lem Lin ind of neurons} we have that $\{w \mapsto \sigma(w \cdot x_i)\}_{i=1}^n$ are linearly independent. Meanwhile, since $R(\theta_{\text{narr}}) \ne 0$, there must be some $i \in \{1, ..., n\}$ with $\ell(H(\theta_{\text{narr}}, x_i), y_i) \ne 0$. But then by Assumption \ref{Asmp Inner loss} on $\ell$, we have $H(\theta_{\text{narr}}, x_i) \ne y_i$ and thus $\partial_p \ell(H(\theta_{\text{narr}}, x_j), y_j) \ne 0$ for some $j \in \{1, ..., n\}$. Therefore, $\varphi$ is a non-trivial linear combination of analytic, linearly independent functions, so it is analytic and not constant zero. But this implies that the set of $\varphi^{-1}(0)$ has zero measure in $\bR^d$. It follows that the set of $(w_k')_{k=m+1}^{m'}$ of weights making $\theta_{\text{wide}}$ a critical point for the loss function has zero measure in $\bR^{(m'-m)d}$. 

    Let $\theta_{\text{wide}}$ be a critical point of the loss function. We now show that it is saddle. Let $U$ be a neighborhood of $\theta_{\text{wide}}$. Since $\varphi^{-1}(0)$ has zero measure, $U$ contains a point 
    \[
        \theta_{\text{wide}}'' = (a_1, w_1, ..., a_m, w_m, 0, w_{m+1}', ..., 0, w_{m'-1}', 0, w_{m'}''), 
    \]
    where $w_{m'}'' \notin \varphi^{-1}(0)$, and thus $\nabla R(\theta_{\text{wide}}'') \ne 0$. On the other hand, as we mentioned above, $H(\theta_{\text{wide}}'', x_i) = H(\theta_{\text{narr}}, x_i) = H(\theta_{\text{wide}}, x_i)$ for all $i$, whence $R(\theta_{\text{wide}}'') = R(\theta_{\text{wide}})$. Then Lemma \ref{Lem A condition of saddle} shows that $\theta_{\text{wide}}$ is a saddle. 
\end{proof}

\begin{prop}[Theorem \ref{Thm SDCP exists, one hidden layer case} in Section \ref{Subsection Sample dependent crit pt}]
    Assume that $\ell: \bR^2 \to \bR$ satisfies: the range of $\partial_p \ell(p, \cdot)$ contains an open interval around $0 \in \bR$. Given integers $m, m', n \in \bN$ such that $m < m'$ and $n \ge 1 + (d+1)m$, given $\theta_{\text{narr}} = (a_k, w_k)_{k=1}^m$. For any fixed $(x_i)_{i=1}^n \in \bR^{nd}$ with $x_i \pm x_j \ne 0$ and for a.e. $w' \in \bR^d$, there are sample outputs $(y_i)_{i=1}^n, (y_i')_{i=1}^n$ such that 
    \[
        \theta_{\text{wide}} = (a_1, w_1, ..., a_m, w_m, 0, w', ..., 0, w')
    \]
    is a critical point for the loss function corresponding to $(x_i, y_i')_{i=1}^n$, but not so to $(x_i, y_i)_{i=1}^n$. Furthermore, when $n \ge 2 + (d+1)m$ we can choose $(y_i')_{i=1}^n$ so that $\theta_{\text{wide}}$ is a saddle. 
\end{prop}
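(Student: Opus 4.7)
The plan is to reduce everything to a linear system on the ``residual vector'' $\vep = (\vep_i)_{i=1}^n$ defined by $\vep_i := \partial_p \ell(H(\theta_{\text{narr}}, x_i), y_i)$. Because $\theta_{\text{wide}}$ shares its first $m$ hidden neurons with $\theta_{\text{narr}}$ and satisfies $H(\theta_{\text{wide}}, \cdot) = H(\theta_{\text{narr}}, \cdot)$, the partials of $R$ at $\theta_{\text{wide}}$ in the shared coordinates agree with those at $\theta_{\text{narr}}$; the partials $\partial R/\partial w_k'$ for $k > m$ vanish automatically since $a_k' = 0$; and $\partial R/\partial a_k' = \sum_{i=1}^n \vep_i \sigma(w' \cdot x_i)$ is one and the same linear functional of $\vep$ for every $k > m$. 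Thus criticality of $\theta_{\text{narr}}$ becomes a homogeneous linear system $M\vep = 0$ with $M \in \bR^{(d+1)m \times n}$, and adjoining the single extra row $v(w')^T \vep = 0$, where $v(w') := (\sigma(w' \cdot x_i))_{i=1}^n$, upgrades this to criticality of $\theta_{\text{wide}}$. Since $\mathrm{rank}(M) \le (d+1)m$ and $n \ge 1 + (d+1)m$, the null space $N(M)$ is nontrivial.

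The core step is to show that for a.e. $w' \in \bR^d$ the vector $v(w')$ is not in the row span of $M$; equivalently, that some $\vep \in N(M)$ satisfies $v(w')^T \vep \ne 0$. Fix any nonzero $\vep^* \in N(M)$ and set $\Phi(w') := v(w')^T \vep^* = \sum_{i=1}^n \vep^*_i \sigma(w' \cdot x_i)$. Under the stated hypotheses on $(x_i)_{i=1}^n$, Lemma~\ref{Lem Lin ind of neurons} implies that $\{w' \mapsto \sigma(w' \cdot x_i)\}_{i=1}^n$ are linearly independent, so $\Phi$ is analytic and not identically zero; Lemma~\ref{Lem Zero set of analytic function} then gives $\Phi^{-1}(0)$ zero measure in $\bR^d$. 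Pick $w'$ off this null set, and scale $\vep := \alpha \vep^*$ by $\alpha \ne 0$ small enough that each $\vep_i$ lies in the open interval around $0$ contained in $\mathrm{ran}\, \partial_p\ell(H(\theta_{\text{narr}}, x_i), \cdot)$; inverting $y_i \mapsto \vep_i$ yields $(y_i)_{i=1}^n$. Then $M\vep = 0$ makes $\theta_{\text{narr}}$ critical, while $v(w')^T \vep = \alpha \Phi(w') \ne 0$ prevents $\theta_{\text{wide}}$ from being critical. For the complementary sample set simply take $y_i' := H(\theta_{\text{narr}}, x_i)$, giving $\vep' = 0$ and trivial criticality of both.

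For the saddle claim, assume $n \ge 2 + (d+1)m$. Then the combined kernel $\{\vep \in N(M) : v(w')^T \vep = 0\}$ has dimension at least $\dim N(M) - 1 \ge n - (d+1)m - 1 \ge 1$, so one can select a nonzero element and rescale it to a small $\vep' \in \bR^n$ whose entries each lie in the admissible range of $\partial_p\ell(H(\theta_{\text{narr}}, x_i), \cdot)$. Inverting $\partial_p\ell$ gives $(y_i')_{i=1}^n$ for which both $\theta_{\text{narr}}$ and $\theta_{\text{wide}}$ are critical and $R(\theta_{\text{narr}}) \ne 0$ (since $\vep' \ne 0$ forces at least one $\ell(H(\theta_{\text{narr}}, x_i), y_i') \ne 0$ via Assumption~\ref{Asmp Inner loss}). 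Proposition~\ref{Prop SDCP are saddles} then identifies $\theta_{\text{wide}}$ as a saddle, while the $(y_i)$ constructed in the previous paragraph certifies sample-dependence. The main obstacle is the central independence argument: everything hinges on showing that the extra row $v(w')^T$ genuinely escapes the row span of $M$ for generic $w'$, and this is exactly where Lemma~\ref{Lem Lin ind of neurons} does the heavy lifting.
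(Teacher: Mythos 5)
Your proposal is correct and follows essentially the same route as the paper's proof: reduce criticality of $\theta_{\text{narr}}$ and $\theta_{\text{wide}}$ to membership of the residual vector $[\partial_p\ell(H(\theta_{\text{narr}},x_i),y_i)]_{i=1}^n$ in the kernels of the matrix $M$ of gradients and its augmentation by the row $(\sigma(w'\cdot x_i))_{i=1}^n$, use Lemma \ref{Lem Lin ind of neurons} together with Lemma \ref{Lem Zero set of analytic function} to show the extra row generically leaves the row span, and use the range hypothesis on $\partial_p\ell$ to realize small kernel vectors as residuals. Your explicit choice $y_i'=H(\theta_{\text{narr}},x_i)$ in the first part is a minor (and clean) simplification of the paper's ``take $v'\in\ker M'$'' step; otherwise the arguments coincide.
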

\begin{proof}
    We use the notations in the proof of Proposition \ref{Aprop SDCP are saddles, one hidden layer case}. Recall that for $\theta_{\text{wide}}$ of the form (\ref{eq 1 for Prop One hidden layer SDCP}) to be a critical point, we must have $w_{m'}' \in \varphi^{-1}(0)$, where 
    \[
        \varphi(w, (y_i)_{i=1}^n) := \varphi(w) = \sum_{i=1}^n \partial_p \ell(H(\theta_{\text{narr}}, x_i), y_i) \sigma(w \cdot x_i). 
    \]
    Define 
    \[
        M := 
        \begin{pmatrix}
            | &\, &| \\
            \nabla_{\theta} H(\theta_{\text{narr}}, x_1) &... &\nabla_{\theta}H(\theta_{\text{narr}}, x_n) \\ 
            | &\, &| 
        \end{pmatrix}. 
    \]
    Since $n \ge 1 + (d+1)m$, the kernel of $M$ is non-trivial. Fix $v \in \ker M \cut \{0\}$. By linear independence of the neurons $\{w \mapsto \sigma(w \cdot x_i)\}_{i=1}^n$, the function $\sum_{i=1}^n v_i \sigma(w \cdot x_i)$ is not constant zero (in $w$), so its zero set has zero measure in $\bR^d$ (Lemma \ref{Lem Zero set of analytic function}) and for a.e. $w'$ we have $\sum_{i=1}^n v_i \sigma(w' \cdot x_i) \ne 0$. Then define 
    \[
        M' := 
        \begin{pmatrix}
            | &\, &| \\
            \nabla_{\theta} H(\theta_{\text{narr}}, x_1) &... &\nabla_{\theta}H(\theta_{\text{narr}}, x_n) \\ 
            | &\, &| \\
            \sigma(w' \cdot x_1) &\, &\sigma(w' \cdot x_n)
        \end{pmatrix}. 
    \]
    and 
    \[
        \theta_{\text{wide}} = (a_1, w_1, ..., a_m, w_m, 0, w', ..., 0, w'). 
    \]
    Notice that for any $k > m$, any $k_0 \in \{1, ..., d\}$, and for any samples $S = \{(x_i, y_i)_{i=1}^n\}$, we have (using $a_k = 0$) 
    \[
        \parf{R_S}{w_{k \bark_0}}(\theta_{\text{wide}}) = a_k \cdot \sum_{i=1}^n \partial_p \ell(H(\theta_{\text{narr}}, x_i), y_i) \sigma'(w' \cdot x_i) (x_i)_{\bark_0} = 0. 
    \]
    Therefore, $\nabla R_S(\theta_{\text{wide}}) = 0$ if and only if $[ \partial_p \ell(H(\theta_{\text{narr}}, x_i), y_i)]_{i=1}^n \in \ker M'$. By our construction above, $v \in \ker M \cut \ker M'$. Let $v' \in \ker M'$. The hypothesis on $\ell$ implies that the range of the map 
    \[
        (q_i)_{i=1}^n \mapsto \left[ \partial_p \ell(H(\theta_{\text{narr}}, x_i), q_i) \right]_{i=1}^n
    \] 
    contains a product neighborhood of $0 \in \bR^n$. This implies the existence of $(y_i)_{i=1}^n$ and $(y_i')_{i=1}^n$ such that $[\partial_p \ell(H(\theta_{\text{narr}}, x_i), y_i)]_{i=1}^n$ is a non-zero multiple of $v$ and $[\partial_p \ell(H(\theta_{\text{narr}}, x_i), y_i')]_{i=1}^n$ is a non-zero multiple of $v'$. Then 
    \[
        M' \left[ \partial_p \ell(H(\theta_{\text{narr}}, x_i), y_i') \right]_{i=1}^n = 0, \quad M' \left[ \partial_p \ell(H(\theta_{\text{narr}}, x_i), y_i) \right]_{i=1}^n \ne 0. 
    \]
    In particular, $\varphi(w', (y_i)_{i=1}^n) \ne 0$. Therefore, $\theta_{\text{wide}}$ is a critical point for the loss corresponding to $(x_i, y_i')_{i=1}^n$, but not a critical point for the loss corresponding to $(x_i, y_i)_{i=1}^n$. 
    
    Now assume that $n \ge 2 + (d+1)m$. In this case $\ker M'$ is non-trivial, so we can find $v' \in \ker M' \cut \{0\}$, and then $(y_i')_{i=1}^n$ such that $[\partial_p \ell(H(\theta_{\text{narr}}, x_i), y_i')]_{i=1}^n$ is a non-zero multiple of $v'$. Then $\theta_{\text{wide}}$ is a critical point at which the loss function is non-zero. Thus, by Lemma \ref{Lem A condition of saddle} it is a saddle. 
\end{proof}

\begin{prop}[Proposition \ref{Prop SDCP are saddles, L layer} in Section \ref{Subsection Sample dependent crit pt}] \label{AProp SDCP are saddles, L layer}
    Given samples $(x_i, y_i)_{i=1}^n$ with $x_i\ne 0$ for all $i$ and $x_i \pm x_j \ne 0$ for $1 \le i < j \le n$. Given integers $\{m_l\}_{l=1}^L, \{m_l'\}_{l=1}^L$ such that $m_l < m_l'$ for every $1 \le l \le L$. Consider two $L$ hidden layer neural networks with input dimension $d$, hidden layer widths $\{m_l\}_{l=1}^L, \{m_l'\}_{l=1}^L$, and output dimension $D$. Denote their parameters by $\theta_{\text{narr}}, \theta_{\text{wide}}$, respectively. Let $\theta_{\text{narr}}$ be a critical point of the loss function corresponding to the samples $(x_i, y_i)_{i=1}^n$, such that $R(\theta_{\text{narr}}) \ne 0$. Denote the following sets: 
    \begin{align*}
        E &= \left\{ \theta_{\text{wide}} = ((a_j')_{j=1}^D, \theta_{\text{wide}}^{(L)}): H(\theta_{\text{wide}}, \cdot) = H(\theta_{\text{narr}}, \cdot), a_j' = (a_{j1}, ..., a_{jm_L}, 0, ..., 0) \right\}; \\ 
        E^* &= \left\{ \theta_{\text{wide}} \in E: \nabla R(\theta_{\text{wide}} ) = 0 \right\}. 
    \end{align*}
    Namely, $E$ is a set of parameters preserving output function, $E^*$ is the set of parameters in $E$ also preserving criticality. Then $E^* \ne E$. Furthermore, $E^*$ contains saddles. 
\end{prop}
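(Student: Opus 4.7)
The plan is to reduce the general $L$-layer case to the one-hidden-layer analysis of Proposition~\ref{Aprop SDCP are saddles, one hidden layer case} by treating the penultimate feature vectors $v_i := H^{(L-1)}(\theta_{\text{wide}}^{(L-1)}, x_i)$ as effective inputs to a width-$m_L'$ one-hidden-layer network. As recorded in the sketch preceding the statement, for any $\theta_{\text{wide}} \in E$ the identity $H(\theta_{\text{wide}},\cdot) = H(\theta_{\text{narr}},\cdot)$ yields
\[
    \frac{\partial R}{\partial a'_{j, m_L'}}(\theta_{\text{wide}}) = \sum_{i=1}^n \partial_j \ell\bigl(H(\theta_{\text{narr}}, x_i), y_i\bigr)\, \sigma\bigl( w^{'(L)}_{m_L'} \cdot v_i \bigr),
\]
which is structurally identical to the function $\varphi$ studied in Proposition~\ref{Aprop SDCP are saddles, one hidden layer case}, with $x_i$ replaced by $v_i$. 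Since $R(\theta_{\text{narr}}) \ne 0$, Assumption~\ref{Asmp Inner loss} guarantees that at least one residual $\partial_j \ell(H(\theta_{\text{narr}}, x_i), y_i)$ is nonzero; fix such an output index $j$.

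The main obstacle, and the first step, is to exhibit some $\theta_{\text{wide}} \in E$ whose features satisfy $v_i \ne 0$ and $v_i \pm v_{i'} \ne 0$ for $1 \le i < i' \le n$, so that Lemma~\ref{Lem Lin ind of neurons} applies with inputs $v_i$. Here I would exploit the strict inequality $m_l < m_l'$ at every layer: the added neurons can be wired so as to be invisible to the output, by setting the entries of $w^{'(L)}_k$ (for $k \le m_L$) at positions corresponding to added layer-$(L-1)$ neurons to zero, and iterating downward so that the preserved subnetwork reproduces $H(\theta_{\text{narr}},\cdot)$. The input weights of the added neurons then become free parameters that influence $v_i$ without changing the output. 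Working layerwise from layer $1$ up to layer $L-1$, I would choose these free weights generically; analyticity of $H^{(L-1)}$ in the free parameters together with Lemma~\ref{Lem Zero set of analytic function} ensures that each of the finitely many degeneracy events $\{v_i = 0\}$ and $\{v_i \pm v_{i'} = 0\}$ occurs on a measure-zero set, so a generic choice delivers nondegenerate features.

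With such a $\theta_{\text{wide}}^{(L-1)}$ in hand, Lemma~\ref{Lem Lin ind of neurons} implies that $\{w \mapsto \sigma(w \cdot v_i)\}_{i=1}^n$ are linearly independent, so the map
\[
    w \mapsto \sum_{i=1}^n \partial_j \ell\bigl(H(\theta_{\text{narr}}, x_i), y_i\bigr)\, \sigma(w \cdot v_i)
\]
is a nontrivial analytic function of $w \in \bR^{m_{L-1}'}$, whose zero set has measure zero by Lemma~\ref{Lem Zero set of analytic function}. Picking $w^{'(L)}_{m_L'}$ off this zero set produces $\theta_{\text{wide}} \in E$ with $\nabla R(\theta_{\text{wide}}) \ne 0$, establishing $E^* \ne E$.

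For the saddle assertion, I would take $\theta_{\text{wide}}^* \in E^*$ to be, e.g., the image of $\theta_{\text{narr}}$ under a null embedding, which lies in $E^*$ because null embeddings produce sample-independent lifted critical points. Given any neighborhood $U$ of $\theta_{\text{wide}}^*$, I would execute the previous construction entirely within $U$: first make small perturbations to the free input weights of added layer-$1$ through layer-$(L-1)$ neurons to achieve feature nondegeneracy (this does not change the output function, so it stays in $E$), then perturb $w^{'(L)}_{m_L'}$ slightly off the zero set of the analytic function above. The resulting $\theta_{\text{wide}}'' \in E \cap U$ satisfies $R(\theta_{\text{wide}}'') = R(\theta_{\text{wide}}^*)$ yet $\nabla R(\theta_{\text{wide}}'') \ne 0$, whence Lemma~\ref{Lem A condition of saddle} yields that $\theta_{\text{wide}}^*$ is a saddle.
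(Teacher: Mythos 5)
Your proposal follows the same overall strategy as the paper's proof of Proposition~\ref{AProp SDCP are saddles, L layer}: use the extra neurons at each layer to build some $\theta_{\text{wide}}^{(L-1)}$ whose penultimate features $v_i = H^{(L-1)}(\theta_{\text{wide}}^{(L-1)}, x_i)$ are nonzero and pairwise distinct up to sign, then treat $\partial R/\partial a'_{j m_L'}$ as the one-hidden-layer function $\varphi(w)=\sum_i \partial_j\ell(H(\theta_{\text{narr}},x_i),y_i)\,\sigma(w\cdot v_i)$ and apply Lemma~\ref{Lem Lin ind of neurons}, Lemma~\ref{Lem Zero set of analytic function}, and Lemma~\ref{Lem A condition of saddle} exactly as in Proposition~\ref{Aprop SDCP are saddles, one hidden layer case}. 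The one genuinely different (and arguably cleaner) move is in the saddle step: you anchor the argument at the null-embedding image $\theta_{\text{wide}}^*$, which is visibly in $E^*$, and perturb the free added weights inside an arbitrary neighborhood of it while staying in $E$; the paper instead fixes one constructed $\theta_{\text{wide}}^{(L-1)}$ and shows that \emph{every} point of the sub-family $E^*(\theta_{\text{wide}}^{(L-1)})\subseteq E^*$ is a saddle, without explicitly checking that this sub-family is nonempty. Your route sidesteps that nonemptiness issue, which is a small but real advantage.

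One place where your sketch is looser than the paper: you justify the layerwise genericity by ``analyticity of $H^{(L-1)}$ in the free parameters together with Lemma~\ref{Lem Zero set of analytic function}.'' But Lemma~\ref{Lem Zero set of analytic function} is a dichotomy — the zero set is either everything or measure zero — so to conclude that each degeneracy event $\{v_i=0\}$ or $\{v_i\pm v_{i'}=0\}$ is measure zero you must also exhibit, at each layer, at least one choice of added weights that avoids it. This non-triviality check is exactly what the paper's inductive construction supplies at every layer via the linear independence from Lemma~\ref{Lem Lin ind of neurons} applied to the previous layer's (already nondegenerate) features. Your sketch gestures at this by working layerwise, but the argument should cite Lemma~\ref{Lem Lin ind of neurons} at each inductive step rather than appealing to analyticity alone.
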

\begin{proof}
    We first show by induction that there is a parameter $\theta_{\text{wide}}^{(L-1)}$ such that 
    \begin{align*}
        &H^{(L-1)}(\theta_{\text{wide}}^{(L-1)}, x_i) \ne 0 &\forall\, 1 \le i \le n, \\
        &H^{(L-1)}(\theta_{\text{wide}}^{(L-1)}, x_i) \pm H^{(L-1)}(\theta_{\text{wide}}^{(L-1)}, x_j) \ne 0 &\forall\,1 \le i < j \le n. 
    \end{align*}

    According to our notation for neural networks (Section \ref{Subsection Fully connected NN}), we denote the entries of $\theta_{\text{narr}}$ as  
    \[
        \theta_{\text{narr}} = \left( (a_{jk})_{j,k_L=1}^{D, m_L}, (w_{k_L}^{(L)})_{k_L=1}^{m_L}, ..., (w_{k_1}^{(1)})_{k_1=1}^{m_1}, \theta^{(0)} \right). 
    \]
    Start with $l = 1$. The linear independence of neurons (Lemma \ref{Lem Lin ind of neurons}) guarantees the existence of some $w_{m_1+1}^{_{'}(1)}, ..., w_{m_1'}^{_{'}(1)}$ such that for every $m_1 < k_1 \le m_1'$, we have $\sigma(w_{k_1}^{_{'}(1)} \cdot x_i) \pm \sigma(w_{k_1}^{_{'}(1)} \cdot x_j) \ne 0$ for $1 \le i < j \le n$. Define 
    \[
        \theta_{\text{wide}}^{(1)} =: \left(w_{k_1}^{_{'}(1)} \right)_{k_1=1}^{m_1'} =  \left(w_1^{(1)}, ..., w_{m_1}^{(1)}, w_{m_1+1}^{_{'}(1)}, ..., w_{m_1'}^{_{'}(1)} \right). 
    \]
    Then the first layer neuron $H^{(1)}(\theta_{\text{wide}}^{(1)}, x) = \left[ \sigma(w_{k_1} \cdot x) \right]_{k_1=1}^{m_1'}$ satisfies (a) $H_{k_1}^{(1)}(\theta_{\text{wide}}^{(1)}, \cdot) = H_{k_1}^{(1)}(\theta_{\text{narr}}^{(1)}, \cdot)$ for $1 \le k_1 \le m_1$, (b) $H^{(1)}(\theta_{\text{wide}}^{(1)}, x_i) \ne 0$ for all $1 \le i \le n$ and (c) $H^{(1)}(\theta_{\text{wide}}^{(1)}, x_i) \pm H^{(1)}(\theta_{\text{wide}}^{(1)}, x_i) \ne 0$ for $1 \le i < j \le n$. Assume that for some $l \in \{1, ..., L-1\}$ we have found $\theta_{\text{wide}}^{(l)}$ such that the following holds: 
    \begin{itemize}
        \item [(a)] $H_{k_l}^{(l)}(\theta_{\text{wide}}^{(l)},x) = H_{k_l}^{(l)}(\theta_{\text{narr}}^{(l)}, x)$ for $1 \le k_l \le m_l$. 
        \item [(b)] $H^{(l)}(\theta_{\text{wide}}^{(l)}, x_i) \ne 0$ for all $1 \le i \le n$. 
        \item [(c)] $H^{(l)}(\theta_{\text{wide}}^{(l)}, x_i) \pm H^{(l)}(\theta_{\text{wide}}^{(l)}, x_j) \ne 0$ for $1 \le i < j \le n$. 
    \end{itemize}
    Then, for the construction of $\theta_{\text{wide}}^{(l+1)}$ we do the following: 
    \begin{itemize}
        \item For each $1 \le k_{l+1} \le m_{l+1}$, set $w_{k_{l+1}}^{_{'}(l+1)} = (w_{k_{l+1}}^{(l+1)}, 0)$. 
        \item For each $m_{l+1} < k_{l+1} \le m_{l+1}'$, find $w_{k_{l+1}}^{_{'}(l+1)} \in \bR^{m_l'}$ such that $\sigma\left( w_{k_{l+1}}^{(l+1)} H^{(l)}(\theta_{\text{wide}}^{(l)}, x_i) \right) \ne 0$ for all $i$ and $\sigma\left( w_{k_{l+1}}^{(l+1)} H^{(l)}(\theta_{\text{wide}}^{(l)}, x_i) \right) \pm \sigma\left( w_{k_{l+1}}^{(l+1)} H^{(l)}(\theta_{\text{wide}}^{(l)}, x_j) \right) \ne 0$ for $1 \le i < j \le n$. The existence of $w_{k_{l+1}'}^{(l+1)}$ is due to the linear independence of the neurons $\left\{ w \mapsto \sigma\left( w H^{(l)}(\theta_{\text{wide}}^{(l)}, x_i) \right) \right\}_{i=1}^n$ from our induction hypothesis (b). 
    \end{itemize}
    Set $\theta_{\text{wide}}^{(l+1)} = \left((w_{k_{l+1}}^{_{'}(l+1)})_{k_{l+1}=1}^{m_{l+1}'}, \theta_{\text{wide}}^{(l)}\right)$. We have
    \begin{align*}
        \sigma\left(w_{k_{l+1}}^{(l+1)'} \cdot H^{(l)}(\theta_{\text{wide}}^{(l)}, x) \right) 
        &= \sigma\left(\sum_{k_l=1}^{m_l} w_{k_{l+1} k_l}^{(l+1)} \cdot H_{k_l}^{(l)}(\theta_{\text{narr}}^{(l)}, x) + 0 H_{m_l'}^{(l)}(\theta_{\text{wide}}^{(l)}, x) \right) \\ 
        &= \sigma\left( w_{k_{l+1}}^{(l+1)} \cdot H^{(l)}(\theta_{\text{narr}}^{(l)}, x) \right), \quad \forall\, 1 \le k_{l+1} \le m_{l+1},\\
        H^{(l+1)}(\theta_{\text{wide}}^{(l+1)}, x_i) \pm H^{(l+1)}(\theta_{\text{wide}}^{(l+1)}, x_j) &\ne 0, \quad \quad \quad \quad \quad \quad \quad \quad \quad \quad \quad \,\,\,\,\, \forall\, 1 \le i < j \le n
    \end{align*}
    Namely, (a), (b) and (c) are satisfied for $H^{(l+1)}(\theta_{\text{wide}}^{(l+1)}, x)$, thus proving the induction step. 

    Recall that the (wider) neural network takes the form 
    \[
        H(\theta_{\text{wide}}, x) = \left[ H_j(\theta_{\text{wide}}, x) \right]_{j=1}^D = \left[\sum_{k=1}^{m_L'} a_{jk} H^{(L)}(\theta_{\text{wide}}^{(L)}, x) \right]_{j=1}^D. 
    \]
    For any $\theta_{\text{wide}}^{(L-1)}$ such that $H_{k_{L-1}}^{(L-1)}\left( \theta_{\text{wide}}^{(L-1)}, x) \right) = H_{k_{L-1}}^{(L-1)}\left(\theta_{\text{narr}}^{(L-1)}, x\right)$ for all $1 \le k_{L-1} \le m_{L-1}$, define $E(\theta_{\text{wide}}^{(L-1)})$ as the set of parameters $\theta_{\text{wide}} = ((a_j')_{j=1}^D, (w_{k_L}^{_{'}(L)})_{k_L=1}^{m_L'}, \theta_{\text{wide}}^{(L-1)})$ with the following properties: 
    \begin{itemize}
        \item For each $1 \le j \le D$, $a_j' = (a_{j1}, ..., a_{jm_L}, 0, ..., 0)$. 
        \item For each $1 \le k_L \le m_L$, $w_{k_L}^{_{'}(L)} = (w_{k_L}^{(L)}, 0)$. 
        \item For each $m_L < k_L \le m_L'$, $w_{k_L}^{_{'}(L)} \in \bR^{m_{L-1}'}$ is arbitrary. 
    \end{itemize}
    Then define 
    \[
        E^*(\theta_{\text{wide}}^{(L-1)}) = \left\{ \theta_{\text{wide}} \in E(\theta_{\text{wide}}^{(L-1)}): \nabla R(\theta_{\text{wide}}) = 0\right\}. 
    \]
    Clearly, $E(\theta_{\text{wide}}^{(L-1)})$ is a connected subset of $E$ of dimension $\ge 1$ and $E^*(\theta_{\text{wide}}^{(L-1)})$ is a subset of $E^*$. We would like to show that for some $\theta_{\text{wide}}^{(L-1)}$, $\nabla R$ is not constant zero on $E(\theta_{\text{wide}}^{(L-1)})$. This means the restriction of $\nabla R$ to $E(\theta_{\text{wide}}^{(L-1)})$ is not constant zero, whence has zero measure in $E(\theta_{\text{wide}}^{(L-1)})$. Let $\theta_{\text{wide}}^{(L-1)}$ be constructed as above. Fix $\theta_{\text{wide}} \in E(\theta_{\text{wide}}^{(L-1)})$. For each $\barj$ consider the partial derivative of the loss function against $a_{\barj m_L'}$: 
    \[
        \parf{R}{a_{\barj m_L'}}(\theta_{\text{wide}}) = 2 \sum_{i=1}^n e_{i \bar{j}} \sigma\left(w_{m_L'}^{_{'}(L)} \cdot H^{(L-1)}(\theta_{\text{wide}}^{(L-1)}, x_i)\right), 
    \]
    where 
    \[
        e_{i\barj} = \partial_{\barj} \ell\left(H(\theta_{\text{wide}}, x_i), y_i\right) = \partial_{\barj} \ell\left(H(\theta_{\text{narr}}, x_i), y_i\right), \quad \forall\, 1 \le i \le n. 
    \]
    The second equality holds because by definition the parameters in $E$ preserve output function. Similar to the proof for Proposition \ref{Aprop SDCP are saddles, one hidden layer case}, we define an analytic function 
    \[
        \varphi(w) = \sum_{i=1}^n e_{i \barj} \sigma\left(w \cdot H^{(L-1)}(\theta_{\text{wide}}^{(L-1)}, x_i) \right), \quad w \in \bR^{m_{L-1}'}. 
    \]
    Note that $\parf{R}{a_{\barj m_L'}}(\theta_{\text{wide}}) = 0$ if and only if $w_{m_L'}^{_{'}(L)} \in \varphi^{-1}(0)$. Since $R(\theta_{\text{narr}}) \ne 0$, there must be some $i$ with $e_{i\barj} \ne 0$. Since $H^{(L-1)}(\theta_{\text{wide}}^{(L-1)}, x_i)\ne 0$ for all $i$ and $H^{(L-1)}(\theta_{\text{wide}}^{(L-1)}, x_i) \pm H^{(L-1)}(\theta_{\text{wide}}^{(L-1)}, x_j) \ne 0$ for $1 \le i < j \le n$, the functions 
    \[
        \left\{w \mapsto \sigma\left(w \cdot H^{(L-1)}(\theta_{\text{wide}}^{(L-1)}, x_i) \right) \right\}
    \]
    are linearly independent. Therefore, $\varphi$ is a non-trivial linear combination of analytic, linearly independent functions, so it is analytic and not constant zero. This means $\varphi^{-1}(0)$ has zero measure in $\bR^d$. In particular, $\parf{R}{a_{\barj m_L}}$ is not constant zero on $E(\theta_{\text{wide}}^{(L-1)})$, so neither is the restriction of $\nabla R$ to $E(\theta_{\text{wide}}^{(L-1)})$, proving our claim. 

    Our proof above shows that for any $\theta_{\text{wide}} \in E^*(\theta_{\text{wide}}^{(L-1)})$ and any neighborhood $U$ of $\theta_{\text{wide}}$ we have $U \cap \left( E(\theta_{\text{wide}}^{(L-1)}) \cut E^*(\theta_{\text{wide}}^{(L-1)}) \right) \ne \emptyset$. Meanwhile, the loss function is constant on $E(\theta_{\text{wide}}^{(L-1)})$. Thus, by Lemma \ref{Lem A condition of saddle} we conclude that $\theta_{\text{wide}}$ is a saddle. 
\end{proof}

\begin{lemma}\label{Lem Equal partial derivatives}
    Given $\theta_{\text{narr}}$. Let $\theta_{\text{wide}}^{(L-1)}$ be constructed as in Proposition \ref{AProp SDCP are saddles, L layer}. Let $\theta_{\text{wide}} \in E(\theta_{\text{wide}}^{(L-1)})$. Then for any $j \in \{1, ..., D\}$ and $k_L \in \{1, ..., m_L\}$ we have $\parf{H}{a_{jk_L}'}(\theta_{\text{wide}}, \cdot) = \parf{H}{a_{jk_L}}(\theta_{\text{narr}}, \cdot)$. Moreover, for any $l \in \{1, ..., L\}$ the following holds: 
    \begin{itemize}
        \item For each $k_l \in \{1, ..., m_l\}$ and $k_{l-1} \in \{1, ..., m_{l-1}\}$ we have $\parf{H}{w_{k_l k_{l-1}}^{_{'} (l)}}(\theta_{\text{wide}}, \cdot) = \parf{H}{w_{k_L k_{l-1}}^{(l)}}(\theta_{\text{narr}}, \cdot)$. 
        \item For each $k_l > m_l$ we have $\parf{H}{w_{k_l k_{l-1}}^{_{'}(l)}}(\theta_{\text{wide}}, \cdot) = 0$. 
    \end{itemize}
\end{lemma}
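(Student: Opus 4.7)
My plan is to unwind the chain rule for $H$ at $\theta_{\text{wide}}$ and reduce every term to the corresponding quantity at $\theta_{\text{narr}}$, by combining a \emph{forward} identity (hidden neurons agree on original indices) with a \emph{backward} identity (sensitivities $\partial H_j/\partial H^{(l)}_{k_l}$ agree on original indices and vanish on extra indices). Property (a) established inside the induction of Proposition \ref{AProp SDCP are saddles, L layer} already gives the forward identity: $H^{(l)}_{k_l}(\theta_{\text{wide}}^{(l)}, x) = H^{(l)}_{k_l}(\theta_{\text{narr}}^{(l)}, x)$ for all $1 \le k_l \le m_l$ and $1 \le l \le L-1$. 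Extending this to $l = L$ at $\theta_{\text{wide}} \in E(\theta_{\text{wide}}^{(L-1)})$ is immediate from the prescription $w_{k_L}^{\,'(L)} = (w_{k_L}^{(L)}, 0)$ for $k_L \le m_L$, since then $w_{k_L}^{\,'(L)} \cdot H^{(L-1)}(\theta_{\text{wide}}^{(L-1)}, x) = w_{k_L}^{(L)} \cdot H^{(L-1)}(\theta_{\text{narr}}^{(L-1)}, x)$.

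For the backward identity, let $B^{(l)}_{k_l,j}(\theta, x) := \partial H_j(\theta, x) / \partial H^{(l)}_{k_l}$. I will prove by backward induction on $l$ that, at $\theta_{\text{wide}}$, one has $B^{(l)}_{k_l,j}(\theta_{\text{wide}}, x) = B^{(l)}_{k_l,j}(\theta_{\text{narr}}, x)$ for $1 \le k_l \le m_l$ and $B^{(l)}_{k_l,j}(\theta_{\text{wide}}, x) = 0$ for $k_l > m_l$. The base case $l = L$ is immediate because $B^{(L)}_{k_L,j}(\theta_{\text{wide}}, x) = a'_{jk_L}$, which equals $a_{jk_L}$ when $k_L \le m_L$ and $0$ when $k_L > m_L$ by the definition of $E$. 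The inductive step uses the identity
\[
B^{(l)}_{k_l,j}(\theta, x) \;=\; \sum_{k_{l+1}} B^{(l+1)}_{k_{l+1},j}(\theta, x)\, \sigma'\!\bigl(w_{k_{l+1}}^{(l+1)} \cdot H^{(l)}(\theta^{(l)}, x)\bigr)\, w^{(l+1)}_{k_{l+1} k_l};
\]
the extra-index terms ($k_{l+1} > m_{l+1}$) drop by the inductive hypothesis, while for the remaining $k_{l+1} \le m_{l+1}$ the prescription $w^{\,'(l+1)}_{k_{l+1}} = (w^{(l+1)}_{k_{l+1}}, 0)$ forces $w^{\,'(l+1)}_{k_{l+1} k_l} = 0$ whenever $k_l > m_l$, giving the vanishing for extra $k_l$; and for $k_l \le m_l$, the argument of $\sigma'$ collapses to $w^{(l+1)}_{k_{l+1}} \cdot H^{(l)}(\theta_{\text{narr}}^{(l)}, x)$ by the forward identity, which together with the inductive hypothesis yields exactly $B^{(l)}_{k_l,j}(\theta_{\text{narr}}, x)$.

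With both identities in hand, the lemma follows by direct substitution into the chain-rule expressions. For the output weights, $\partial H_j / \partial a'_{jk_L}(\theta_{\text{wide}}, x) = H^{(L)}_{k_L}(\theta_{\text{wide}}^{(L)}, x) = H^{(L)}_{k_L}(\theta_{\text{narr}}^{(L)}, x) = \partial H_j / \partial a_{jk_L}(\theta_{\text{narr}}, x)$ by the forward identity (extended to $l=L$ as above). For layer-$l$ weights, the chain rule gives
\[
\tfrac{\partial H_j}{\partial w^{\,'(l)}_{k_l k_{l-1}}}(\theta_{\text{wide}}, x) \;=\; B^{(l)}_{k_l,j}(\theta_{\text{wide}}, x)\, \sigma'\!\bigl(w^{\,'(l)}_{k_l} \cdot H^{(l-1)}(\theta_{\text{wide}}^{(l-1)}, x)\bigr)\, H^{(l-1)}_{k_{l-1}}(\theta_{\text{wide}}^{(l-1)}, x),
\]
and for $k_l \le m_l$, $k_{l-1} \le m_{l-1}$, each factor reduces to the narrower-network counterpart by the forward and backward identities, while for $k_l > m_l$ the first factor vanishes, killing the whole expression. (At $l=1$ the input layer plays the role of $H^{(0)}$, identically equal to $x$, so the reduction is trivial.) The only obstacle is notational/indexing hygiene in the backward induction; the arithmetic itself is forced by the zero-padding structure of $\theta_{\text{wide}}$.
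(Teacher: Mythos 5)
Your proposal is correct, and it proves exactly the statements of the lemma. It is the ``backward-mode'' dual of the paper's argument: the paper writes $\frac{\partial H}{\partial w^{'(l)}_{k_l k_{l-1}}}(\theta_{\text{wide}},x)$ as a forward product of Jacobians $A'D^{'(L)}W^{'(L)}\cdots D^{'(l+1)}$ applied to an initial tangent vector, and inducts upward on the claim that the first $m_{\bar l}$ entries of the propagated vector $v^{'(\bar l)}$ agree with the narrow network's $v^{(\bar l)}$, with the final multiplication by $A'=[A,\,O_{D\times(m_L'-m_L)}]$ discarding the extra coordinates; you instead propagate the sensitivities $B^{(l)}_{k_l,j}=\partial H_j/\partial H^{(l)}_{k_l}$ backward and induct downward on the claim that they agree on original indices and vanish on extra ones. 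The two arguments use the identical structural inputs (zero-padded weights, zero output coefficients on the added neurons, agreement of hidden activations on original indices), so nothing new is gained in generality, but your organization is cleaner on one point: for $k_l>m_l$ the paper dismisses the term by asserting that the initial column vector $[w^{'(l+1)}_{k_{l+1}k_l}]_{k_{l+1}=1}^{m_{l+1}'}$ vanishes, which is only true for the entries with $k_{l+1}\le m_{l+1}$ (the weights feeding the \emph{extra} neurons of layer $l+1$ are arbitrary and may load on coordinate $k_l$); the genuine reason the contribution dies is that the backward sensitivities of those extra neurons are zero, which is precisely what your base case and inductive step establish. So your route both matches the lemma and quietly repairs a small imprecision in the paper's write-up.
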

\begin{proof}
    The proof is basically straightforward computations. By definition we have 
    \begin{equation}\label{eq 1 for Lem Equal partial derivatives}
        \parf{H}{a_{jk_L}'}(\theta_{\text{wide}}, x) = \sigma\left( w_{k_L}^{_{'}(L)} \cdot H^{(L-1)}(\theta_{\text{wide}}^{(L-1)}, x)\right). 
    \end{equation}
    Recall that in our construction, $w_{k_L}^{_{'}(L)} = (w_{k_L}^{(L)}, 0)$ and $H_{k_{L-1}}^{(L-1)}(\theta_{\text{wide}}^{(L-1)}, x) = H_{k_{L-1}}^{(L-1)}(\theta_{\text{narr}}^{(L-1)}, x)$ for all $1 \le k_{L-1} \le m_{L-1}$, whence 
    \[
        \parf{H}{a_{jk_L}'}(\theta_{\text{wide}}, x) = \sigma\left( \sum_{k_{L-1}=1}^{m_{L-1}} w_{k_L k_{L-1}}^{(L)} H_{k_{L-1}}^{(L-1)}(\theta_{\text{narr}}^{(L-1)}, x) \right) = \parf{H}{a_{jk_L}}(\theta_{\text{narr}}, \cdot). 
    \]
    This proves the first part of the lemma. 

    To prove the result for $\parf{H}{w_{k_l k_{l-1}}^{_{'} (l)}}(\theta_{\text{wide}}, \cdot)$ we observe that 
    \begin{align*}
        \parf{H}{w_{k_l k_{l-1}}^{_{'} (l)}}(\theta_{\text{wide}}, x) 
        &= A' D^{_{'}(L)} W^{_{'}(L)}... D^{_{'}(l+1)}\colThree{w_{1 k_l}^{_{'}(l+1)}}{\vdots}{w_{m_{l+1}' k_l}^{_{'}(l+1)}} \\
        &\,\,\,\,\,\,\,\cdot \sigma'\left( w_{k_l}^{_{'}(l)} \cdot H^{(l-1)}(\theta_{\text{wide}}^{(l-1)}, x) \right) H_{k_{l-1}}^{(l-1)}(\theta_{\text{wide}}^{(l-1)}, x) \\
        \parf{H}{w_{k_l k_{l-1}}^{(l)}}(\theta_{\text{narr}}, x) 
        &= AD^{(L)} W^{(L)}... D^{(l+1)}\colThree{w_{1 k_l}^{(l+1)}}{\vdots}{w_{m_{l+1}' k_l}^{(l+1)}} \\
        &\,\,\,\,\,\,\,\cdot  \sigma'\left( w_{k_l}^{(l)} \cdot H^{(l-1)}(\theta_{\text{narr}}^{(l-1)}, x) \right) H_{k_{l-1}}^{(l-1)}(\theta_{\text{narr}}^{(l-1)}, x). 
    \end{align*}
    where $A', A$ are the matrices whose rows are $a_j', a_j$'s:
    \[
        A' = \maThree{-}{a_1'}{-}{\,}{\vdots}{\,}{-}{a_D'}{-}, \quad A = \maThree{-}{a_1}{-}{\,}{\vdots}{\,}{-}{a_D}{-}
    \]
    and for each $1 \le \bar{l} \le L$ we define 
    \begin{align*}
        D^{_{'}(\bar{l})} &= \maThree{\sigma'\left( w_{1}^{_{'}(\bar{l})} \cdot H^{(\bar{l}-1)}(\theta_{\text{wide}}^{(\bar{l}-1)}, x)\right)}{\,}{\,}{\,}{\ddots}{\,}{\,}{\,}{\sigma'\left(w_{m_{\bar{l}}}^{_{'}(\bar{l})} \cdot H^{(\bar{l}-1)}(\theta_{\text{wide}}^{(\bar{l}-1)}, x)\right)}, \\
        D^{(\bar{l})} &= \maThree{\sigma'\left( w_{1}^{(\bar{l})} \cdot H^{(\bar{l}-1)}(\theta_{\text{narr}}^{(\bar{l}-1)}, x)\right)}{\,}{\,}{\,}{\ddots}{\,}{\,}{\,}{\sigma'\left(w_{m_{\bar{l}}}^{(\bar{l})} \cdot H^{(\bar{l}-1)}(\theta_{\text{narr}}^{(\bar{l}-1)}, x)\right)}, \\ 
        W^{_{'}(\bar{l})} &= \maThree{-}{w_1^{_{'}(\bar{l})}}{-}{\,}{\vdots}{\,}{-}{w_{m_{\bar{l}}}^{_{'}(\bar{l})}}{-}, \\
        W^{(\bar{l})} &= \maThree{-}{w_1^{(\bar{l})}}{-}{\,}{\vdots}{\,}{-}{w_{m_{\bar{l}}}^{(\bar{l})}}{-}. 
    \end{align*}
    Again, recall that $w_{k_{l+1}}^{_{'}(l+1)} = (w_{k_{l+1}}^{(l+1)}, 0)$. In particular, when $k_l > m_l$ we have $w_{k_{l+1} k_l}^{_{'}(l+1)} = 0$. Thus, 
    \[
        \sigma'\left( w_{k_l}^{_{'}(l)} \cdot H^{(l-1)}(\theta_{\text{wide}}^{(l-1)}, x) \right) H_{k_{l-1}}^{(l-1)}(\theta_{\text{wide}}^{(l-1)}, x) \colThree{w_{1 k_l}^{_{'}(l+1)}}{\vdots}{w_{m_{l+1}' k_l}^{_{'}(l+1)}} = 0 \in \bR^{m_{l+1}'}, 
    \]
    which shows $\parf{H}{w_{k_l k_{l-1}}^{_{'}(l)}}(\theta_{\text{wide}}, x) = 0$ when $k_l > m_l$. Now let $k_l \le m_l$ and $k_{l-1} \in \{1, ..., m_{l-1}\}$. For each $l < \bar{l} \le L$ define 
    \begin{align*}
        v^{_{'}(\bar{l})} &= W^{_{'}(\bar{l})} D^{_{'}(\bar{l})}...W^{_{'}(l+1)}D^{_{'}(l+1)}\colThree{w_{1 k_l}^{_{'}(l+1)}}{\vdots}{w_{m_{l+1}' k_l}^{_{'}(l+1)}} \\
        &\,\,\,\,\,\,\,\cdot  \sigma'\left( w_{k_l}^{_{'}(l)} \cdot H^{(l-1)}(\theta_{\text{wide}}^{(l-1)}, x) \right) H_{k_{l-1}}^{(l-1)}(\theta_{\text{wide}}^{(l-1)}, x) \\
        v^{(\bar{l})} &= W^{(\bar{l})} D^{(\bar{l})}...W^{(l+1)}D^{(l+1)}\colThree{w_{1 k_l}^{(l+1)}}{\vdots}{w_{m_{l+1}' k_l}^{(l+1)}} \\
        &\,\,\,\,\,\,\,\cdot  \sigma'\left( w_{k_l}^{(l)} \cdot H^{(l-1)}(\theta_{\text{narr}}^{(l-1)}, x) \right) H_{k_{l-1}}^{(l-1)}(\theta_{\text{narr}}^{(l-1)}, x), 
    \end{align*}
    anbd similarly, define 
    \begin{align*}
        v^{_{'}(l)} = \sigma'\left( w_{k_l}^{_{'}(l)} \cdot H^{(l-1)}(\theta_{\text{wide}}^{(l-1)}, x) \right) H_{k_{l-1}}^{(l-1)}(\theta_{\text{wide}}^{(l-1)}, x) \colThree{w_{1 k_l}^{_{'}(l+1)}}{\vdots}{w_{m_{l+1}' k_l}^{_{'}(l+1)}}, \\ 
        v^{(l)} = \sigma'\left( w_{k_l}^{(l)} \cdot H^{(l-1)}(\theta_{\text{narr}}^{(l-1)}, x) \right) H_{k_{l-1}}^{(l-1)}(\theta_{\text{narr}}^{(l-1)}, x) \colThree{w_{1 k_l}^{(l+1)}}{\vdots}{w_{m_{l+1}' k_l}^{(l+1)}} 
    \end{align*}
    We shall first prove that the first $m_{\bar{l}}$ entries of $v^{_{'}(\bar{l})}$ and the first $m_{\bar{l}}$ entries of $v^{(\bar{l})}$ coincide for each $l \le \bar{l} \le L$. The key is that by our construction of $\theta_{\text{wide}}^{(L-1)}$, for any $1 \le \bar{l} \le L$ and any $k_{\bar{l}} \le m_{\bar{l}}$ we have 
    \[
        \sigma' \left( w_{k_{\bar{l}}}^{_{'}(\bar{l})} \cdot H^{(\bar{l}-1)}(\theta_{\text{wide}}^{(\bar{l}-1)}, x) \right) = \sigma' \left( w_{k_{\bar{l}}}^{(\bar{l})} \cdot H^{(\bar{l}-1)}(\theta_{\text{narr}}^{(\bar{l}-1}, x) \right). 
    \]
    Since we also have $H_{k_{l-1}}^{(l-1)}(\theta_{\text{wide}}^{(l-1)}, x) = H_{k_{l-1}}^{(l-1)}(\theta_{\text{narr}}^{(l-1)}, x)$ and $w_{k_{l+1} k_l}^{_{'}(l)} = w_{k_{l+1} k_l}^{(l)}$ for $1 \le k_{l+1} \le m_{l+1}$, our claim clearly holds for $v^{_{'}(l)}$ and $v^{(l)}$. Suppose the result holds for some $\bar{l} < L$. Then we can write $v^{_{'}(\bar{l})}$ as $v^{_{'}(\bar{l})} = (v^{(\bar{l})}, u)^\TT$ for some vector $u$. Then 
    \begin{align*}
        v^{_{'}(\bar{l}+1)} 
        &= W^{_{'}(\bar{l}+1)} D^{_{'}(\bar{l}+1)} v^{_{'}(\bar{l})} \\ 
        &= W^{_{'}(\bar{l}+1)} 
        \begin{pmatrix}
            D^{(\bar{l}+1)} v^{(\bar{l})}\\ 
            \mathrm{diag}\left[ \sigma'\left( w_{m_{\bar{l}}+1}^{_{'}(\bar{l+1})} \cdot H^{(\bar{l}+1)}(\theta_{\text{wide}}^{(\bar{l}+1)}, x)\right) \right]_{k_{\bar{l}+1}> m_{\bar{l}}}
             u 
        \end{pmatrix} \\ 
        &= 
        \begin{pmatrix}
            W^{(\bar{l}+1)} D^{(\bar{l}+1)} v^{\bar{l}} \\ 
            \maThree{-}{w_{m_{\bar{l}+1}+1}^{_{'}(\bar{l}+1)}}{-}{\,}{\vdots}{\,}{-}{w_{m_{\bar{l}+1}'}^{_{'}(\bar{l}+1)}}{-} \mathrm{diag}\left[ \sigma'\left( w_{m_{\bar{l}}+1}^{_{'}(\bar{l+1})} \cdot H^{(\bar{l}+1)}(\theta_{\text{wide}}^{(\bar{l}+1)}, x)\right) \right]_{k_{\bar{l}+1}> m_{\bar{l}}} u 
        \end{pmatrix} \\
        &= 
        \begin{pmatrix}
            v^{(\bar{l}+1)} \\ 
            \maThree{-}{w_{m_{\bar{l}+1}+1}^{_{'}(\bar{l}+1)}}{-}{\,}{\vdots}{\,}{-}{w_{m_{\bar{l}+1}'}^{_{'}(\bar{l}+1)}}{-} \mathrm{diag}\left[ \sigma'\left( w_{m_{\bar{l}}+1}^{_{'}(\bar{l+1})} \cdot H^{(\bar{l}+1)}(\theta_{\text{wide}}^{(\bar{l}+1)}, x)\right) \right]_{k_{\bar{l}+1}> m_{\bar{l}}} u
        \end{pmatrix}.
    \end{align*}
   This completes the induction step. Finally, 
    \begin{align*}
        \,\,\,\,\,\,\,\parf{H}{w_{k_l k_{l-1}}^{_{'} (l)}}(\theta_{\text{wide}}, x) 
        &= A' v^{_{'}(L)} = \left[A, O_{D \times (m_L' - m_L)} \right] v^{_{'}(L)}\\
        &= A v^{(L)}  = \parf{H}{w_{k_l k_{l-1}}^{(l)}}(\theta_{\text{narr}}, x), 
    \end{align*}
    completing the proof. 
\end{proof}

\begin{prop}[Theorem \ref{Thm SDCP exists, L layer} in Section \ref{Subsection Sample dependent crit pt}]\label{Prop SDCP exists, L layer}
    Assume that $\ell: \bR^2 \to \bR$ satisfies: the range of $\partial_p \ell(p, \cdot)$ contains a neighborhood around $0 \in \bR^D$. Given $\theta_{\text{narr}}$. Let $\theta_{\text{wide}}^{(L-1)}$ be constructed as in Proposition \ref{AProp SDCP are saddles, L layer}. Let $N$ denote the parameter size of the narrower network. 
    \begin{itemize}
        \item [(a)] Consider sample size $n \ge \frac{1 + N}{D}$. For any fixed $(x_i)_{i=1}^n \in \bR^{nd}$ with $x_i \pm x_j \ne 0$ and for a.e. $\theta_{\text{wide}} \in E(\theta_{\text{wide}}^{(L-1)})$, there are sample outputs $(y_i)_{i=1}^n, (y_i')_{i=1}^n$ such that $\theta_{\text{wide}}$ is a critical point for the loss function corresponding to $(x_i, y_i')_{i=1}^n$ but not so to $(x_i, y_i)_{i=1}^n$. 

        \item [(b)] Consider sample size $n \ge \frac{1 + D + \sum_{l=2}^L m_l(m_{l-1}' - m_{l-1}) + N}{D}$. Then we can choose $(y_i')_{i=1}^n$ so that $E(\theta_{\text{wide}}^{(L-1)})$ contains saddles. 
    \end{itemize}
\end{prop}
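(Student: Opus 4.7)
The plan is to adapt the one-hidden-layer construction (Proposition \ref{Aprop SDCP are saddles, one hidden layer case}) to the multi-layer setting, using the structure of $E(\theta_{\text{wide}}^{(L-1)})$ revealed by Lemma \ref{Lem Equal partial derivatives}. Writing $\eta_{ij} := \partial_j \ell(H(\theta_{\text{narr}}, x_i), y_i)$ and viewing $\eta \in \bR^{nD}$, the criticality condition $\nabla R(\theta_{\text{wide}}) = 0$ at any $\theta_{\text{wide}} \in E(\theta_{\text{wide}}^{(L-1)})$ decomposes into three families of linear equations in $\eta$: (i) the $N$ equations from partials with respect to narrow-network parameters, which by Lemma \ref{Lem Equal partial derivatives} coincide with the equations forcing $\nabla R(\theta_{\text{narr}}) = 0$; (ii) equations from $\partial R / \partial a_{j k_L}'$ with $k_L > m_L$, each of the form $\sum_i \eta_{ij} \sigma(w_{k_L}^{_{'}(L)} \cdot H^{(L-1)}(\theta_{\text{wide}}^{(L-1)}, x_i)) = 0$, which depend on the variable weights $(w_{k_L}^{_{'}(L)})_{k_L > m_L}$; and (iii) equations from partials with respect to the zeroed-out weights $w_{k_l k_{l-1}}^{_{'}(l)}$ with $k_l \le m_l$, $k_{l-1} > m_{l-1}$, $l \in \{2, \ldots, L\}$, which are fixed by $\theta_{\text{wide}}^{(L-1)}$ and total $\sum_{l=2}^L m_l(m_{l-1}'-m_{l-1})$ constraints. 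Denote the associated matrices by $M_0$, $B(w)$ (with $D$ rows when a single $w$ is shared across all $k_L > m_L$), and $C$.

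For part (a), the bound $nD \ge 1 + N$ gives $\ker M_0 \ne \{0\}$, so fix any $v \in \ker M_0 \cut \{0\}$. If $v \notin \ker C$, then $v$ violates criticality at every $\theta_{\text{wide}} \in E(\theta_{\text{wide}}^{(L-1)})$. Otherwise, pick some $j_0$ with $v_{\cdot, j_0} \ne 0$; by Lemma \ref{Lem Lin ind of neurons} applied to the vectors $H^{(L-1)}(\theta_{\text{wide}}^{(L-1)}, x_i)$ — whose required pairwise non-zero/non-opposite property is built into the construction of $\theta_{\text{wide}}^{(L-1)}$ in Proposition \ref{AProp SDCP are saddles, L layer} — the map $w \mapsto \sum_i v_{i j_0} \sigma(w \cdot H^{(L-1)}(\theta_{\text{wide}}^{(L-1)}, x_i))$ is a non-zero analytic function, whose zero set has measure zero by Lemma \ref{Lem Zero set of analytic function}, so for a.e. $w_{m_L+1}^{_{'}(L)}$ the $(m_L+1, j_0)$-equation of (ii) fails on $v$. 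In either case, for a.e. $\theta_{\text{wide}} \in E(\theta_{\text{wide}}^{(L-1)})$ we have $v \notin \ker M_{\text{wide}}(\theta_{\text{wide}})$; the range hypothesis on $\partial_p \ell$ then lets us choose $(y_i)$ with $\eta = \lambda v$ for a small $\lambda > 0$, making $\theta_{\text{narr}}$ critical but $\theta_{\text{wide}}$ not, while $(y_i') := (H(\theta_{\text{narr}}, x_i))$ gives $\eta = 0$ and places $\theta_{\text{wide}}$ at a global minimum, hence critical.

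For part (b), the sharper bound $nD \ge 1 + D + N + \sum_{l=2}^L m_l(m_{l-1}'-m_{l-1})$ gives $\dim \ker M^* \ge 1$ for the matrix $M^*$ obtained by stacking $M_0$, $C$, and $B(w^*)$ for any fixed $w^* \in \bR^{m_{L-1}'}$. Take $\theta_{\text{wide}}$ with $w_{k_L}^{_{'}(L)} = w^*$ for all $k_L > m_L$, pick $\eta \in \ker M^* \cut \{0\}$, and use the range condition (after rescaling by a small $\lambda > 0$) to realize $\eta_i = \partial_p \ell(H(\theta_{\text{narr}}, x_i), y_i')$. Then $\theta_{\text{wide}}$ is critical, and $\eta \ne 0$ forces $y_{i_0}' \ne H(\theta_{\text{narr}}, x_{i_0})$ for some $i_0$ by Assumption \ref{Asmp Inner loss}, so $R(\theta_{\text{wide}}) > 0$. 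To certify $\theta_{\text{wide}}$ as a saddle I apply Lemma \ref{Lem A condition of saddle}: $R$ is constant on $E(\theta_{\text{wide}}^{(L-1)})$ by output-preservation, and perturbing a single $w_{k_L}^{_{'}(L)} = w^*$ to $w^* + \epsilon$ replaces one (ii)-row $B_{j_0}(w^*)$ by $B_{j_0}(w^* + \epsilon)$. Choosing $j_0$ so that $\eta_{\cdot, j_0} \ne 0$, the analytic map $\epsilon \mapsto \sum_i \eta_{i j_0} \sigma((w^* + \epsilon) \cdot H^{(L-1)}(\theta_{\text{wide}}^{(L-1)}, x_i))$ is non-zero with measure-zero zero set, so arbitrarily small perturbations yield points of $E(\theta_{\text{wide}}^{(L-1)})$ with non-vanishing gradient and identical loss value, completing the saddle certification.

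The main obstacle is the bookkeeping needed to establish (iii) — specifically, verifying via a backward-propagation identity mirroring that in Lemma \ref{Lem Equal partial derivatives} that the partials of $R$ with respect to the zeroed-out weights $w_{k_l k_{l-1}}^{_{'}(l)}$ ($k_l \le m_l$, $k_{l-1} > m_{l-1}$) factor as a narrow-network backprop term times $H_{k_{l-1}}^{(l-1)}(\theta_{\text{wide}}^{(l-1)}, x_i)$, so that they contribute exactly linear functionals of $\eta$ of the count $\sum_{l=2}^L m_l(m_{l-1}'-m_{l-1})$ matching the threshold. Once this is in hand, the remainder parallels the one-hidden-layer argument, with Lemma \ref{Lem Lin ind of neurons} applied to $\{H^{(L-1)}(\theta_{\text{wide}}^{(L-1)}, x_i)\}_i$ playing the role that linear independence of input-layer neurons played there.
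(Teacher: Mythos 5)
Your proposal is correct and follows essentially the same route as the paper's proof: the kernel argument on the stacked Jacobian $M$ to produce $(y_i)$ for which $\theta_{\text{narr}}$ is critical but $\theta_{\text{wide}}$ is not (via linear independence of the last-hidden-layer neuron functions and the measure-zero zero set of a nontrivial analytic combination), the row count $N + \sum_{l=2}^L m_l(m_{l-1}'-m_{l-1}) + D$ obtained by collapsing the repeated last-layer rows when all extra weights $w_{k_L}^{_{'}(L)}$ coincide, and Lemma \ref{Lem A condition of saddle} for the saddle conclusion, with the bookkeeping for the zeroed-out weights supplied by Lemma \ref{Lem Equal partial derivatives} exactly as in the paper. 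Your version is in fact slightly more explicit than the paper's in two minor places (the case split on $\ker C$ in part (a), and the perturbation-of-$w^*$ argument certifying the hypothesis of Lemma \ref{Lem A condition of saddle} in part (b)), but these are presentational refinements rather than a different method.
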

\begin{proof}
    The proof is almost identical to that of Proposition \ref{Aprop SDCP are saddles, one hidden layer case}. 
\begin{itemize}
    \item[(a)] Define $M$ as an $N$-rows, $Dn$-columns block matrix 
    \[
        M = \left[D_{\theta}H(\theta_{\text{narr}}, x_1) \, ... \, D_{\theta} H(\theta_{\text{narr}}, x_n) \right]. 
    \]
    For any samples $S =: (x_i, y_i)_{i=1}^n$ we have $\nabla R_S(\theta_{\text{narr}}) = 0$ if and only if 
    \[
        M \colThree{\partial_p \ell(H(\theta_{\text{narr}}, x_1), y_1)}{\vdots}{\partial_p \ell(H(\theta_{\text{narr}}, x_n), y_n)} = 0 \in \bR^N, 
    \]
    where $\partial_p \ell$ denotes the gradient of $\ell$ with respect to its first entry. Since $n \ge \frac{1 + N}{D}$, $M$ has more columns than rows and $\ker M$ is non-trivial. Fix any $v \in \ker M \cut \{0\}$ and find $(y_i)_{i=1}^n$ such that the (vectorized) vector of partial derivatives $[\partial_p \ell(H(\theta_{\text{wide}}, x_i), y_i)]_{i=1}^n$ is a non-zero multiple of $v$. Thus, $\partial_j \ell(H(\theta_{\text{narr}}, x_i), y_i) \ne 0$ for some $i, j$. Recall that our construction of $\theta_{\text{wide}}^{(L-1)}$ implies $H^{(L-1)}(\theta_{\text{wide}}^{(L-1)}, x_i) \pm H^{(L-1)}(\theta_{\text{wide}}^{(L-1)}, x_j) \ne 0$. By Lemma \ref{Lem Lin ind of neurons}, the analytic function 
    \[
        \varphi: w \mapsto \sum_{i=1}^n \partial_j \ell(H(\theta_{\text{wide}}, x_i), y_i) \sigma\left(w \cdot H^{(L-1)}(\theta_{\text{wide}}^{(L-1)}, x_i) \right)
    \]
    is not constant zero. Thus, for a.e. $w' \in \bR^{m_L'}$ we have $\varphi(w') \ne 0$. In particular, the set 
    \[
        \left\{ \theta_{\text{wide}} \in E(\theta_{\text{wide}}^{(L-1)}): w_{m_L'}^{_{'}(L)} \notin \varphi^{-1}(0) \right\}
    \]
    has full-measure in $E(\theta_{\text{wide}}^{(L)})$. Note that any $\theta_{\text{wide}}$ in this set is not a critical point of the loss function corresponding to $(x_i, y_i)_{i=1}^n$, because the partial derivative for $a_{j m_L'}'$ is non-zero (see also (\ref{eq 1 for Lem Equal partial derivatives}) for the formula of $\parf{H}{a_{j k_L}'}$). 

    Fix $\theta_{\text{wide}}$ in this set. Define 
    \[
        M' = \left[ D_\theta H(\theta_{\text{wide}}, x_1) \, ... \, D_\theta H(\theta_{\text{wide}}, x_n) \right]. 
    \]
    By Lemma \ref{Lem Equal partial derivatives}, part of each submatrix $D_\theta H(\theta_{\text{wide}}, x_i)$ of $M'$ is $D_\theta H(\theta_{\text{narr}}, x_i)$. In particular, by rearranging the rows if necessary $M'$ can be written as the following block matrix
    \[
        M' = \colTwo{M}{U}. 
    \]
    Let $v' \in \ker M'$ and find some $(y_i')_{i=1}^n$ such that $[\partial_p \ell(H(\theta_{\text{wide}}, x_i), y_i)]_{i=1}^n$ is a non-zero multiple of $v'$. Then 
    \[
        M' \colThree{\partial_p \ell(H(\theta_{\text{narr}}, x_1), y_1')}{\vdots}{\partial_p \ell(H(\theta_{\text{narr}}, x_n), y_n')} = 0, 
    \]
    which implies that $\theta_{\text{wide}}$ is a critical point of the loss corresponding to $(x_i, y_i')_{i=1}^n$. 

    \item [(b)] By Lemma \ref{Lem Equal partial derivatives}, the entries of $U$ consists of the following: 
    \begin{itemize}
        \item [i)] $\parf{H}{w_{k_l k_{l-1}}^{_{'}(l)}}(\theta_{\text{wide}}, x_i)$ for $k_l < m_l$, $k_{l-1} > m_{l-1}$ and $1 \le i \le n$. 
        \item [ii)] $\parf{H}{a_{jk_L}'}(\theta_{\text{wide}}, x_i)$ for $k_L > m_L$ and $1 \le i \le n$. 
    \end{itemize}
    The first part gives $\sum_{l=2}^L m_l(m_{l-1}' - m_{l-1})$ number of rows of $U$, while the second part gives $D(m_{l-1}' - m_l)$ number of rows of $U$. However, for any $\theta_{\text{wide}} \in E(\theta_{\text{wide}}^{(L-1)})$ such that $w_{m_L+1}^{_{'}(L)} = ... = w_{m_L'}^{_{'}(L)}$, this reduces to only $D$ different rows (see also (\ref{eq 1 for Lem Equal partial derivatives}) for the formula of $\parf{H}{a_{j k_L}'}$). In other words, for such $\theta_{\text{wide}}$ we have a $D + \sum_{l=2}^L m_l(m_{l-1}' - m_{l-1}) + N$ row matrix $M''$ with $\ker M'' = \ker M'$. Since $n \ge \frac{1 + D + \sum_{l=2}^L m_l(m_{l-1}' - m_{l-1}) + N}{D}$, $M'$ and $M''$ have more rows than columns, so there is some $v' \in \ker M'' \cut \{0\}$. Find $(y_i')_{i=1}^n$ such that $[\partial_p \ell(H(\theta_{\text{wide}}, x_i), y_i)]_{i=1}^n$ is a non-zero multiple of $v'$. Then 
    \[
        M' \colThree{\partial_p \ell(H(\theta_{\text{narr}}, x_1), y_1')}{\vdots}{\partial_p \ell(H(\theta_{\text{narr}}, x_n), y_n')} = 0, 
    \]
    which implies that $\theta_{\text{wide}}$ is a critical point of the loss corresponding to $(x_i, y_i')_{i=1}^n$. Meanwhile, since $[\partial_p \ell(H(\theta_{\text{wide}}, x_i), y_i)]_{i=1}^n \ne 0$, by Assumption \ref{Asmp Inner loss} the loss function is non-zero at $\theta_{\text{wide}}$ (and thus non-zero at $\theta_{\text{narr}}$).It follows from Lemma \ref{Lem A condition of saddle} that $\theta_{\text{wide}}$ is a saddle. 
    
\end{itemize}
\end{proof}

\end{document}